\documentclass[11pt]{article}
\usepackage{amsfonts,amsmath,bm, xspace,caption}
\usepackage{multirow,graphicx, algorithm,algorithmic,rotating}
\usepackage{url,cite}
\usepackage{fullpage}

\newtheorem{theorem}{Theorem}[section]
\newtheorem{lemma}[theorem]{Lemma}
\newtheorem{corollary}[theorem]{Corollary}
\newtheorem{proposition}[theorem]{Proposition}
\newtheorem{definition}[theorem]{Definition}

\def \endprf{\hfill {\vrule height6pt width6pt depth0pt}\medskip}
\newenvironment{proof}{\noindent {\bf Proof} }{\endprf\par}

\newcommand{\eat}[1]{}



\usepackage[tight,footnotesize]{subfigure}
\usepackage{amssymb, epsfig,bm}

\newcommand{\G}{\mathcal{G}}
\newcommand{\A}{\mathcal{A}}
\newcommand{\N}{\mathcal{N}}
\newcommand{\R}{\mathbb{R}}
\newcommand{\E}{\mathbb{E}}
\newcommand{\K}{\mathcal{K}}

\newcommand{\vct}[1]{\bm{#1}}
\newcommand{\mtx}[1]{\bm{#1}}
\newcommand{\vx}{\vct{x}}
\newcommand{\vy}{\vct{y}}
\newcommand{\vz}{\vct{z}}
\newcommand{\vv}{\vct{v}}
\newcommand{\va}{\vct{a}}
\newcommand{\vw}{\vct{w}}
\newcommand{\vr}{\vct{r}}

\newcommand{\x}{\vct{x}}
\newcommand{\y}{\vct{y}}
\newcommand{\z}{\vct{z}}

\newcommand{\mk}{\mtx{K}}
\newcommand{\mS}{\mtx{\Sigma}}
\newcommand{\mV}{\mtx{V}}
\newcommand{\mU}{\mtx{U}}
\newcommand{\mP}{\mtx{P_S}}
\newcommand{\br}{\bar{\vr}}
\newcommand{\bw}{\bar{\vw}}
\newcommand{\bv}{\bar{\vv}}

\title{Signal Recovery in Unions of Subspaces with Applications to Compressive Imaging}

\author{
Nikhil Rao$^\dagger$, Benjamin Recht$^*$ and Robert D. Nowak$^\dagger$\\
$^*$ Computer Sciences Department, University of Wisconsin-Madison\\
$^\dagger$ Electrical and Computer Engineering, University of Wisconsin-Madison}

\date{September 2012}

\begin{document}

\maketitle

\vspace{-0.3in}

\begin{abstract}
In applications ranging from communications to genetics, signals can be modeled as lying in a union of subspaces. Under this model, signal coefficients that lie in certain subspaces are active or inactive together. The potential subspaces are known in advance, but the particular set of subspaces that are active (i.e., in the signal support) must be learned from measurements. We show that exploiting  knowledge of subspaces can further reduce the number of measurements required for exact signal recovery, and derive universal bounds for the number of measurements needed. The bound is universal in the sense that it only depends on the number of subspaces under consideration, and their orientation relative to each other. The particulars of the subspaces (e.g., compositions, dimensions, extents, overlaps, etc.) does not affect the results we obtain.  In the process, we derive sample complexity bounds for the special case of the group lasso with overlapping groups (the latent group lasso), which is used  in a variety of applications. Finally, we also show that wavelet transform coefficients of images can be modeled as lying in groups, and hence can be efficiently recovered using group lasso methods.  
\end{abstract}

{\bf Keywords.}  Union of Subspaces, Group Sparsity, Convex Optimization, Structured Sparsity, Compressed Sensing

\section{Introduction}
In many fields such as genetics, image processing, and machine learning, one is faced with the task of recovering very high dimensional signals from relatively few measurements. In general this is not possible, but fortunately many real world signals are, or can be transformed to be, sparse, meaning that only a small fraction of signal coefficients is non-zero. Compressed Sensing \cite{CRT,donoho} allows us to recover sparse, high dimensional signals with very few measurements as compared to the ambient signal dimension. In fact,  results indicate that one only needs $\mathcal{O}(s\cdot \log p)$ random measurements to exactly recover an $s$ sparse signal of length $p$.

In many applications however, one not only has knowledge about the sparsity of the signal, but some additional information about the structure of the sparsity pattern as well:

\begin{enumerate} 
\item In genetics, the genes are arranged into pathways/clusters, and genes belonging to the same pathway are often active/inactive in a group \cite{pathway, hassibigene}.
\item In image processing, the wavelet transform coefficients can be modeled as belonging to a tree, with parent-child coefficients simultaneously being large or small \cite{crouse98, romberg}.
\item In wideband spectrum sensing applications, the spectrum typically displays clusters of non-zero frequency coefficients, each corresponding to a narrowband transmission \cite{analogCS}
\item In applications in analog compressed sensing \cite{eldaranalog1, eldaranalog2}, the signals can be expressed as lying in a union of shift invariant subspaces.
\item In reconstruction of signals having a finite rate of innovation \cite{bluinnov1, bluinnov2}, non zeros are known to be clustered spatially, corresponding to objects in a scene for example.
\end{enumerate}

In cases such as these, the sparsity pattern can be represented lying in a union of certain subspaces (e.g., coefficients in certain pathways, tree branches, frequency bands, or clusters). This knowledge about the signal structure can help further reduce the number of measurements one needs to exactly recover the signal. In this paper, we derive bounds on the number of random i.i.d. Gaussian measurements needed to exactly recover a sparse signal when its pattern of sparsity lies in a union of subspaces, based on solving a \emph{convex} recovery algorithm. This characterization specializes to the latent group lasso, introduced in \cite{jacob, latent, percival}, wherein the sparsity pattern can be expressed as lying in a union of groups.

We analyze the recovery problem using a random Gaussian measurement model. We emphasize that although the derivation assumes the measurement matrix to be Gaussian, it can be extended to \emph{any} subgaussian case, by paying a small constant penalty, as shown in \cite{subgaussian}. We restrict ourselves to the Gaussian case here since it highlights the main ideas and keeps the analysis as simple as possible.

\subsection{Prior Work}
To the best of our knowledge, these results are new and distinct from prior theoretical characterizations of group lasso and the general union of subspace methods. Sampling theorems for unions of subspaces have been considered in \cite{ludo, BluDav}, where the authors show that the sample complexity depends logarithmically on the number of subspaces under consideration. In \cite{eldaruos}, the authors also propose a greedy scheme to recover signals that lie in such unions. The authors in \cite{huang09} derive information theoretic bounds for the number of measurements needed for a variety of signal ensembles, including trees. In \cite{modelbased,model09}, the authors show that one needs far fewer measurements when the signal can be expressed as lying in a union of subspaces, and explicit bounds are derived when using a modified version of CoSaMP \cite{cosamp} to recover the signal. Asymptotic consistency results are derived for the group lasso \cite{yuanlin} when the groups partition the space of variables in \cite{bachConsistency}. Similarly, in \cite{zhang09}, the authors again consider the groups to partition the space, and derive conditions for recovery using the group lasso. In \cite{jenatton, jenatton10}, the authors derive consistency results for the group lasso under arbitrary groupings of variables. Also,  in \cite{Mest}, the authors consider overlapping groups and derive sample bounds. The authors in \cite{jacob}  derive consistency results in an asymptotic setting, for the group lasso with overlap, but do not provide exact recovery results. The group lasso with overlap (called the latent group lasso in \cite{latent}) is analyzed in detail in \cite{latent, percival}. 

Non-greedy schemes have not been developed to handle the case of the union of subspaces, nor the latent group lasso. Although group lasso with overlapping groups have been considered in the past, it yields vectors whose support can be expressed as a complement of a union of groups, while we consider cases where we require the support to lie in a union of groups, a distinction made in \cite{jacob}. In the applications considered above, however, it is imperative that we recover a sparsity pattern that lies in a union of groups (or subspaces in more generality). 


\subsection{Our contributions}

To derive our results, we appeal to the notion of restricted minimum singular values of an operator. The restricted minimum singular value (or equivalently the restricted eigenvalues of the gram matrix of the operator) conditions are weaker than the well known Restricted Isometry Conditions of operators, and have been used and studied in \cite{Bickel, tangCMSV}, among others.

We bound number of measurements needed for exact recovery with two terms. One term ($kB$) grows linearly in the total number of non-zero coefficients (with a constant of proportionality). This is close to the bare minimum of one measurement per non-zero component. Intuitively, this term corresponds to the magnitude estimation once the locations of the non zero components have been determined. The other term logarithmically depends on the number of subspaces under consideration and their relative orientations, and not the particulars of the subspaces (e.g., compositions, dimensions, extents, etc.). In particular, the subspaces need not be disjoint. Intuitively, this term corresponds to the price we pay for the detection of the non zeros in the signal (the active subspaces). The degree to which subspaces overlap, remarkably, has no effect on our bounds. In this regard, our bounds can be termed to be \emph{universal}. This is somewhat surprising since overlapping subspaces are strongly coupled in the observations, tempting one to suppose that overlap may make recovery more challenging.

Our main theoretical result shows that for signals with support on $k$ of $M$ possible subspaces, exact recovery is possible from $C_1*(\sqrt{2\log(M-k)} + \sqrt{B})^2k + C_2*kB$ measurements using a latent group lasso type algorithm, B being the maximum subspace dimension. The constants $C_1$ and $C_2$ depend on the relative angle between subspaces, and will be explicitly derived in the sequel. Note that the bound depends on the sparsity $s$ of the signal via the $kB$ term. The latent group lasso reduces to a  special case of our result. We will routinely compare the performance of the group lasso to the standard lasso, to study the effects of overlap between subspaces on the actual number of measurements needed to exactly recover a signal. For the lasso bound, we will use the one derived in \cite{venkat}: $(2s + 1)\log(p-s)$. Assuming that $M = \mathcal{O}(poly(p))$, our bound is roughly $k\log(p) + kB$. For the same problems, the lasso which ignores the group structure of the sparse signal components would require approximately $kB\log(p)$ measurements. Hence, taking advantage of the subspace structure will allow us to take fewer measurements to reconstruct the signal. 

Note that in this work, the subspaces can be arbitrary, and we make \emph{no} assumptions about their nature, except that they are known in advance. In short, we derive bounds for any generic union of subspaces, whether they overlap or form a partition of the ambient high dimensional space. Note that, when we say ``do not overlap", we mean that the intersection of the subspaces is $\{\vct0\}$, the all zeros vector.

We then propose a novel way to model wavelet coefficients of images in this framework, and show that we perform at least as well as several other state of the art methods in compressive imaging.

To summarize, our contributions in this paper are as follows:

\begin{enumerate}
\item We derive non asymptotic sample complexity bounds in a compressive-sensing framework when the measurement matrix is i.i.d. Gaussian, and the signal can be expressed as lying in a sparse union of finite dimensional subspaces.
\item We show that our bound holds regardless of the nature of overlap between subspaces. In this sense, the bounds we derive are universal. 
\item We show that the group lasso with overlapping groups is a special case of the general result that we derive
\item We propose a new method to model wavelet coefficients of signals, so that one can use convex optimization algorithms to recover the signal exactly, or with high fidelity in the presence of noise. 
\item An extensive series of experiments verify the theory.
\end{enumerate}

The rest of this paper is organized as follows: In section \ref{sec:setup} we set up the problem of recovering a signal lying in a union of subspaces, and present some preliminaries that will be useful in deriving the bounds. Our result for the sample complexity for the union of subspaces is derived in section \ref{sec:result}. We then extend the framework for the case of group lasso with overlapping groups in section \ref{sec:glassoresult}. Section \ref{sec:approx} extends our results to approximately sparse signals. In section \ref{sec:icip} we propose a novel framework for modeling wavelet transform coefficients of images, that makes use of the concepts we present in the sections that precede it. Experimental validation is provided in section \ref{sec:expts}. Finally, we conclude our paper in section \ref{sec:conc}, and present avenues for future work. 

\section{The Union of Subspaces Model}
\label{sec:setup}
In the course of the next sections, we derive measurement bounds for the exact recovery of a signal lying in a  sparse union of finite dimensional subspaces, and the robust recovery of one that is approximately sparse. In this section, we will argue as to why exact recovery of the signal corresponds to the minimization of the atomic norm of the signal, with the atoms obeying certain properties governed by the signal structure. Before we do so, we dispense with the notations. 

\subsection{Notations}
Consider a signal of length $p$, that is $s$ sparse. Note here that in case of multidimensional signals like images, we assume they are vectorized to have length $p$.

Suppose we are given a set of bases for $M$ subspaces
\[
\K = \{ K_1, K_2, \ldots, K_M \}
\]
Let the dimensions of each subspace be given by $d_1, d_2, \ldots, d_M$, with $B = \max_i d_i$. So, $K_i \in \R^{p \times d_i}$. We assume that $span[K_1K_2 \ldots K_M] = \R^p$. Without loss of generality, assume each $K_i$ to be orthonormal. If not, we can perform the Gram Schmidt procedure to orthonormalize them.  The subspaces can be overlapping or non overlapping. When we say two subspaces do not overlap, we mean that 
\[
\mathrm{span}(K_i) \cap \mathrm{span}(K_j) = \{0\}
\]

Also, the subspaces may or may not be perpendicular to each other. Two subspaces are perpendicular if 
\begin{equation} \label{eqperp}
| \langle k_a, k_b\rangle | = 0  ~\ \forall k_a \in A_i \text{ and } \forall k_b \in A_j\end{equation}
where 
\[ A_i = \mbox{span}(K_i)\backslash (\mbox{span}(K_i) \cap \mbox{span}(K_j))\]
and
\[ A_j = \mbox{span}(K_j)\backslash( \mbox{span}(K_i) \cap \mbox{span}(K_j)) \]

The notion of subspaces being perpendicular to each other is clarified in Fig. \ref{perpsubs}. It is fairly obvious that the more perpendicular the subspaces are to each other, the more separated they are, and hence the easier to is to distinguish between which among the subspaces is active.

\begin{figure}[!h]
\centering
\includegraphics[scale = 0.8]{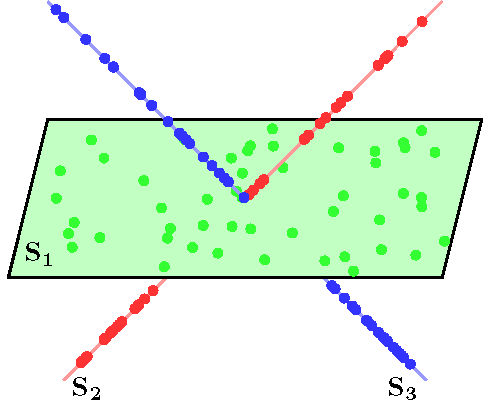}
\caption{Three subspaces, commonly encountered in subspace clustering methods. Subspaces S2 and S3 are perpendicular to each other. However, S1 and S2 are not perpendicular, nor are S1 and S3}
\label{perpsubs}
\end{figure}

We denote vectors by bold lowercase letters ($\va, \vv$ etc.), and matrices by bold uppercase letters ($\mtx{M}$, etc.). Subscripts following a vector denotes a particular index of the vector. For any vector $\vv$, we will routinely use the following decomposition:
\[
\vv = \sum_{i = 1}^M K_i \bv^i
\]
where $\bv^i \in \R^{d_i}$.
The decomposition holds since we assume $\mathrm{span}([\mk_1 \mk_2, \ldots ,\mk_M]) = \R^p$. Superscripts following a matrix will denote the sub matrix whose columns are the columns indexed by the superscript. $|\cdot|$ denotes the cardinality of a set. 

We let $\vx^\star$ be the (subspace sparse) signal to be recovered, whose non zero coefficients lie in $k$ of the $M$ subspaces $\K^\star \subset \K$, with $k << M$.  Formally, noting that $\vx^\star = \sum_i K_i \bar{x}^{\star i}$, 
\[
\K^\star = \left\{ K_i \in \K: \| \bar{x}^{\star i}\| \neq 0 \right\}
\]
We then have $|\K^\star | = k$. Let the indices of then active subspaces be given by $J$. That is, 
\[
J \subset \{1,2, \ldots ,p\} : \{ j \in J \iff K_j \in \K^\star \}
\]
Later in the paper, we will also consider approximately sparse signals. We let $\mtx{\Phi}_{n\times p}$ be a measurement matrix consisting of i.i.d. Gaussian entries of mean 0 and variance $\frac1n$ so that every column is a realization of an i.i.d. Gaussian length $n$ vector with covariance matrix $\frac1n \mtx{I}$. We denote the observed vector by $\vy \in \R^n ~\ : \vy = \mtx\Phi \vx^\star$. The absence of a subscript following a norm $\| \cdot \|$ implies the $\ell_2$ norm. The dual norm of $\|\cdot\|_p$ is denoted by $\|\cdot\|_p^*$. The convex hull of a set of points $S$ is denoted by $\mathrm{conv}(S)$. We let $\sigma_i(\mtx{M})$ denote the $i^{th}$ singular value of a matrix $\mtx{M}$. We define
\[
\mtx{K^\star} = [K_{j_1} K_{j_2} \ldots] ~\ \forall j_i \in J
\]
and
\[
\mtx K = [K_1 K_2 \ldots K_M]
\]
Finally, let $\kappa(\mk)$ be the condition number of $\mk$.



\subsection{Atoms, Atomic Set and the Atomic Norm}

To begin with, let us formalize the notion of atoms and the atomic norm of a signal (or vector). We will restrict our attention to  signals in $\R^p$ that can be expressed as  lying in a sparse union of subspaces, though the same concepts can be extended to other spaces as well. We assume that $\vx \in \R^p$ can be decomposed as :
\[
\vx = \sum_{i = 1}^k c_i \va^i, ~\ c_i \geq 0 
\]
The vectors $\va^i \in \R^p$ are called \emph{atoms}, and form the basic building blocks of any signal, which can be represented as a conic combination of the atoms. Note that the sum notation, rather than the integral notation, implies that only a countable number of coefficients can be non-zero. We denote $\A = \{ \va\}$ to be the \emph{atomic set}. Given a vector $\vx\in\R^p$ and an atomic set, we define the \emph{atomic norm} as
\begin{equation}
\label{anormdef}
||\vx||_{\A} = \inf \left\{ \sum_{\va \in \A} c_a : \vx =  \sum_{\va \in \A} c_a\va, ~\  c_a \geq 0   ~\ \forall \va \in \A \right\}
\end{equation}
The atomic decomposition of the signal yields a representation of a signal in terms of some predefined atoms. Usually, few atoms used in a representation indicates a ``simpler" representation. Hence, to obtain a ``simple" representation of a vector, we look to minimize the atomic norm subject to constraints (\ref{minAnorm}): 
\begin{equation}
\label{minAnorm}
\hat{\x} = \underset{\x \in \R^p}{\operatorname{argmin}} ~\ ||\x||_\A  ~\ \textbf{s.t. } \y = \mtx\Phi \x
\end{equation}

When the measurements are corrupted with noise $\vct\theta$, such that $\| \vct\theta\| \leq \epsilon$, the atomic norm minimization problem becomes:
 \begin{equation}
\label{minAnormnoise}
\hat{\x} = \underset{\x \in \R^p}{\operatorname{argmin}}~\ ||\x||_\A  ~\ \textbf{s.t. } \|\y - \mtx\Phi \x\|^2 \leq \epsilon^2
\end{equation}

Indeed, when the atoms are merely the canonical basis in $\R^p$, the atomic norm reduces to the standard $\ell_1$ norm, and minimization of the atomic norm yields the well known \emph{lasso} procedure \cite{tibshirani}. 

Assuming we are aware of the subspaces $\K$, we now proceed to define the atomic set and the corresponding atomic norm for our framework. Let
\[
A_i = \{ \va \in \R^p ~\ :  \exists \vct\alpha^i ~\ : \va = K_i \vct\alpha^i, ~\ \|\va\| = \|\vct\alpha^i\| = 1 \}
\]
\begin{equation}
\label{aset}
\A  = \cup_{i = 1}^M A_i
\end{equation}

The sub vectors $\va \in A_i$ form the boundary of the unit sphere restricted to the span of $K_i$.

The atomic norm of a vector $x \in \R^p$ is given by
\begin{equation}
\label{anormdef}
\| \x \|_\A = \min_{\x = \sum_{\va} c_i \va} \sum c_i ~\ c_i \geq 0
\end{equation}

\begin{lemma}
\label{lemanormform}
The atomic norm for a signal $\x \in \R^p$ lying in a union of subspaces is given by
\[
\|\x\|_\A = \min_{\x = \sum_i K_i \vct\alpha^i} \sum_i \| \vct\alpha^i \|
\]
\end{lemma}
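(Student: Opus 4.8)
The plan is to prove the two inequalities $\|\x\|_\A \le \min_{\x=\sum_i K_i\vct\alpha^i}\sum_i\norm{\vct\alpha^i}$ and $\|\x\|_\A \ge \min_{\x=\sum_i K_i\vct\alpha^i}\sum_i\norm{\vct\alpha^i}$ separately, exploiting the fact that orthonormality of each $K_i$ makes the atoms of $A_i$ exactly the unit-norm images $K_i\vct\beta$ with $\norm{\vct\beta}=1$. First I would record this correspondence: since $K_i^T K_i = \mtx{I}$, we have $\norm{K_i\vct\beta} = \norm{\vct\beta}$, so the defining condition $\norm{\va}=\norm{\vct\alpha^i}=1$ in \eqref{aset} collapses to $A_i = \{K_i\vct\beta : \norm{\vct\beta} = 1\}$. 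This is the bridge between the atomic picture and the subspace-coefficient picture.

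For the $\le$ direction, I would start from any feasible subspace decomposition $\x = \sum_i K_i\vct\alpha^i$. For each index $i$ with $\vct\alpha^i \ne \vct0$, set $c_i = \norm{\vct\alpha^i}$ and $\va^i = K_i(\vct\alpha^i/\norm{\vct\alpha^i})$; by the correspondence above $\va^i \in A_i \subset \A$, and $K_i\vct\alpha^i = c_i\va^i$. This exhibits $\x = \sum_i c_i\va^i$ as a valid conic atomic decomposition with $\sum_i c_i = \sum_i\norm{\vct\alpha^i}$, so the atomic-norm infimum is at most $\sum_i\norm{\vct\alpha^i}$; minimizing over subspace decompositions then gives the inequality.

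For the $\ge$ direction, I would take any atomic decomposition $\x = \sum_a c_a\va_a$ with $c_a \ge 0$ and group the atoms by the subspace to which they belong: writing $\va_a = K_{i(a)}\vct\beta_a$ with $\norm{\vct\beta_a}=1$, collect for each $i$ the partial sum $\vct\alpha^i = \sum_{a:i(a)=i} c_a\vct\beta_a$. Then $\sum_i K_i\vct\alpha^i = \sum_a c_a\va_a = \x$ is a feasible subspace decomposition, and the triangle inequality yields $\norm{\vct\alpha^i} \le \sum_{a:i(a)=i} c_a\norm{\vct\beta_a} = \sum_{a:i(a)=i} c_a$, so that $\sum_i\norm{\vct\alpha^i} \le \sum_a c_a$. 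Taking the infimum over atomic decompositions on the right, while noting the left is an admissible value of the subspace minimization, gives the reverse inequality, and the two directions together establish equality.

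The only delicate points are bookkeeping rather than substance: one must allow several atoms to originate from the same subspace (handled by the partial sums above, where the triangle inequality is the crucial and only nontrivial estimate), and one must treat the case $\vct\alpha^i = \vct0$ by simply omitting that subspace from the atomic decomposition. I would also note that $\sum_i\norm{\vct\alpha^i}$ is itself a norm on the concatenated coefficient vector, hence coercive on the affine feasible set, so both infima are attained and writing $\min$ in place of $\inf$ is justified; this is the step I would state but not belabor.
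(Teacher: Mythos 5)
Your proof is correct, and it is actually more complete than the paper's own argument. The paper proves this lemma by a single substitution --- setting $c_i = \norm{K_i\vct\alpha^i} = \norm{\vct\alpha^i}$ and $\va = K_i\left(\vct\alpha^i/\norm{\vct\alpha^i}\right)$ --- which is exactly your ``$\le$'' direction: it converts a subspace decomposition into an atomic decomposition of equal cost, and leaves the reverse inequality implicit. What your write-up adds is the ``$\ge$'' direction, and this is not mere bookkeeping: an atomic decomposition of $\x$ may use several distinct atoms drawn from the same $A_i$, so the paper's substitution does not by itself set up a correspondence that exhausts all atomic decompositions. Your grouping of atoms by subspace, $\vct\alpha^i = \sum_{a:\, i(a)=i} c_a \vct\beta_a$, followed by the triangle inequality $\norm{\vct\alpha^i} \le \sum_{a:\, i(a)=i} c_a$, is precisely the step needed to show that collapsing repeated atoms from the same subspace can only lower the cost, so the subspace-decomposition minimum is a genuine lower bound on the atomic norm. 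Your remark on attainment of the infimum (coercivity of $\sum_i \norm{\vct\alpha^i}$ on the affine feasible set, which is nonempty because $\mathrm{span}(\mtx{K}) = \R^p$) is likewise absent from the paper and is what justifies writing $\min$ rather than $\inf$. In short: you use the same underlying correspondence as the paper, but your two-inequality structure closes a small genuine gap in the published proof.
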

\begin{proof}
The result follows by substituting
\[
c_i = \| K_i \vct\alpha^i \| = \|\vct\alpha^i\|
\] 
and
\[
\va = K_i \left( \frac{\vct\alpha_i}{\| K_i \vct\alpha^i \| } \right) = K_i \left(\frac{\vct\alpha_i}{\|\vct\alpha^i \| }\right)
\]
\end{proof}

The notion of the atomic set and the corresponding atomic norm for the union of subspaces model is made clear in Fig \ref{balls}. The figure shows the atomic norm balls, given by the convex hull of the atomic set $\A$. 

\begin{figure}[!h]
\begin{center}
\subfigure[Norm ball for Perpenducular subspaces]{
\includegraphics[scale = 0.25]{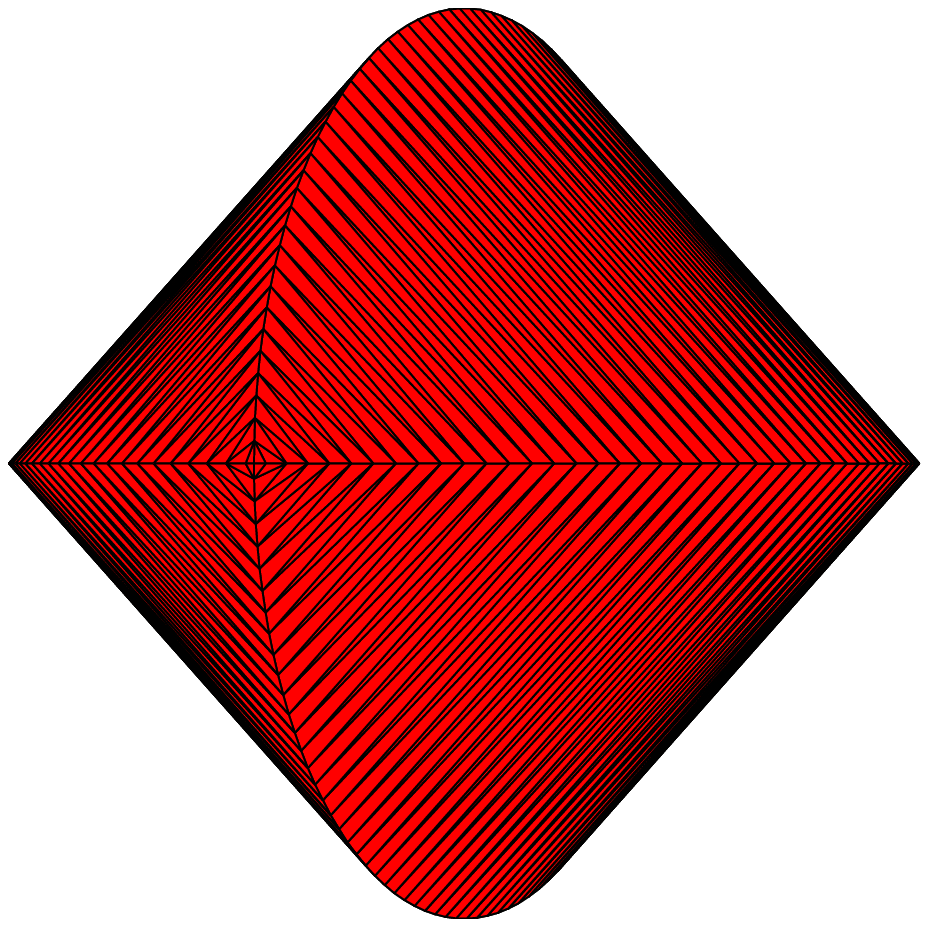}
\label{fig:glassoball}}
\subfigure[Norm ball for Non-perpendicular subspaces]{
\includegraphics[scale = 0.25]{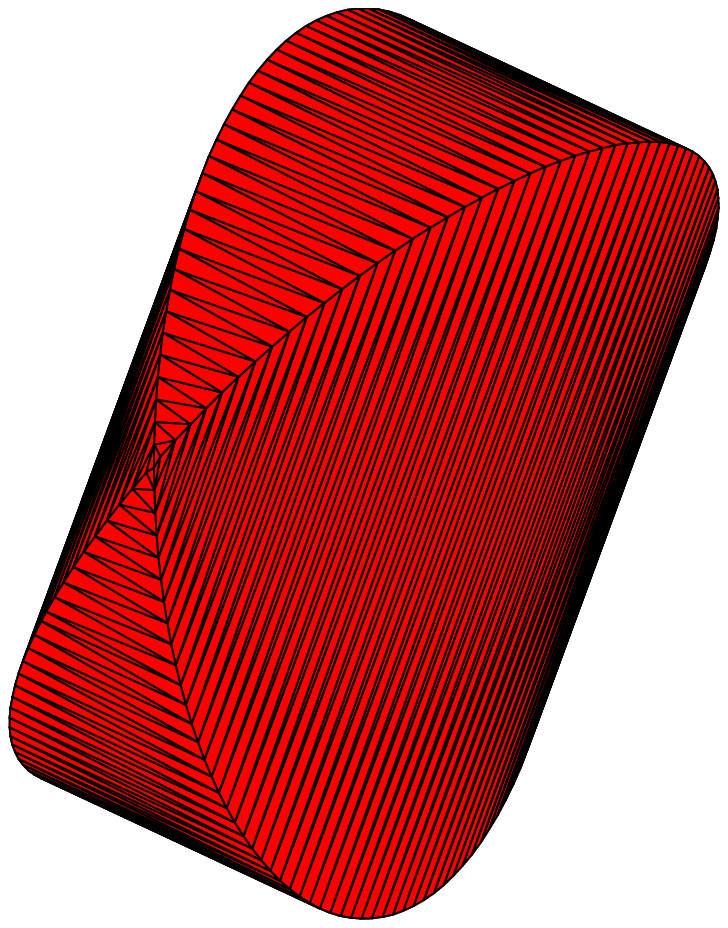}
\label{fig:badglassoball}}
\caption{Atomic norm balls for a pair of perpendicular and non-perpendcular subspaces. The atomic set corresponds to the union of boundaries of the uint disks. Each disk corresponds to a particular $A_i$ in (\ref{aset})}
\label{balls}
\end{center}
\end{figure}

Also note that we can directly compute the dual of the atomic norm from the set of atoms
\begin{align}
\notag
\|\vx\|^*_\A &= \sup_{\va} \langle \vx , \va \rangle \\
\label{dualdef}
&= \max_{i = 1,2,\ldots M} \| (K_i)^T \vx  \|
\end{align}
That is, the dual norm is the maximum over the norms of the projections of $\vx$ onto the different subspaces, noting that $[(K_i)^TK_i]^{-1} = \mtx{I}$.
The dual norm will be useful in our derivations below.

\subsection{Gaussian Widths and Exact Recovery}

Following \cite{venkat}, we define the \emph{tangent cone} and \emph{normal cone} at $\x^\star$ with respect to $conv(\A)$ under $||\x||_\A$ as \cite{wets}:
\begin{align}
\label{TconeD}
\mathcal{T}_{\A}(\x^\star) &= \operatorname{cone} \{ \z - \x^\star ~\ : ||\z||_\A \leq |\|x^\star||_\A \}\\
\label{NconeD}
\mathcal{N}_\A(\x^\star) &= \{ \vct{u}~:~ \langle \vct{u},\z \rangle \leq 0,~~\forall \z \in \mathcal{T}_{\A}(\x^\star)\} \\
\nonumber &= \{\vct{u}~:~ \langle \vct{u}, \x^\star \rangle = \gamma \|\x\|_{\A}~ \\
\nonumber ~&\mbox{and}~\|\vct{u}\|_{\A}^* \leq \gamma~\mbox{for some}~\gamma \geq 0\}
\end{align}

We note that, from \cite{venkat} (Prop. 2.1), $\hat{\x} = \x^\star$ (\ref{minAnorm}) is unique \emph{iff} 
\begin{equation}
\label{prop}
\mbox{null}(\mtx\Phi) \cap \mathcal{T}_{\A}(\vx^\star) = \{0\} 
\end{equation}
Hence, we require that the tangent cone at $\x^\star$ intersects the nullspace of $\mtx\Phi$ only at the origin, to guarantee exact recovery. 

Before we state the main recovery result from~\cite{venkat}, we define the \emph{Gaussian width} of a set:
\begin{definition}
Let $\mathbb{S}^{p-1}$ denote the unit sphere in $\R^p$.  The Gaussian width $\omega(S)$ of a set $S \in \mathbb{S}^{p-1}$ is 

\[
\omega(S) = \mathbb{E}_{\vct{g}} \left[ \sup_{\z \in S} \vct{g}^T\z \right]
\]

where $\vct{g} \sim \mathcal{N}(0,I)$ is an i.i.d. standard Gaussian vector.
\end{definition}
Gordon used the Gaussian width to provide bounds on the probability that a random subspace of a certain dimension misses a subset of the sphere~\cite{gordon}.  In~\cite{venkat}, these results are specialized to the case of atomic norm recovery.  In particular, we will make use of the following: 

\begin{proposition} \label{corl:width}[\cite{venkat}, Corollary 3.2]
Let $\mtx\Phi: \R^p \rightarrow \R^n$ be a random map with i.i.d. zero-mean Gaussian entries having variance $1/n$.  Further let $\Omega = T_\A(\x^*) \cap \mathbb{S}^{p-1}$ denote the spherical part of the tangent cone $T_\A(\x^\star)$.  Suppose that we have measurements $\vy = \mtx\Phi \vx^\star$, and we solve the convex program (\ref{minAnorm}). Then $\x^\star$ is the unique optimum of (\ref{minAnorm}) with high probability provided that
	\begin{equation*}
	n \geq \omega(\Omega)^2+\mathcal{O}(1).
	\end{equation*}
\end{proposition}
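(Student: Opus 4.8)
The plan is to combine the deterministic recovery characterization (\ref{prop}) with Gordon's Gaussian comparison inequality. By Proposition~2.1 of \cite{venkat}, quoted just above, exact and unique recovery is equivalent to the requirement that $\mathrm{null}(\mtx\Phi)$ meet the tangent cone $\mathcal{T}_\A(\vx^\star)$ only at the origin. Since $\mathcal{T}_\A(\vx^\star)$ is a cone, this is the same as asking that $\mtx\Phi$ not annihilate any point of its spherical cross-section; that is, it suffices to show that the \emph{minimum gain}
\[
m(\mtx\Phi) := \min_{\z \in \Omega} \|\mtx\Phi \z\|
\]
is strictly positive, where $\Omega = \mathcal{T}_\A(\vx^\star) \cap \mathbb{S}^{p-1}$. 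The whole task thus reduces to showing $m(\mtx\Phi) > 0$ with high probability.

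First I would strip off the variance normalization by writing $\mtx\Phi = \tfrac{1}{\sqrt n}\mtx G$, where $\mtx G$ has i.i.d.\ standard Gaussian entries, so that $m(\mtx\Phi) = \tfrac{1}{\sqrt n}\min_{\z\in\Omega}\|\mtx G \z\|$. The core step is Gordon's lower bound on the expected restricted minimum singular value: for a fixed subset $\Omega\subseteq\mathbb{S}^{p-1}$,
\[
\E\Big[\min_{\z \in \Omega}\|\mtx G \z\|\Big] \;\geq\; \lambda_n - \omega(\Omega),
\]
where $\lambda_n = \E_{\vct h \sim \N(0,I_n)}\|\vct h\|$ is the expected length of a standard Gaussian vector in $\R^n$. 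This inequality is the substantive ingredient; it follows from Gordon's min--max comparison theorem, which dominates the Gaussian process $\z \mapsto \|\mtx G\z\|$ from below by a comparison process whose expected minimum is exactly $\lambda_n - \omega(\Omega)$.

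Next I would convert this bound on the mean into a high-probability statement by concentration of measure. The map $\mtx G \mapsto \min_{\z\in\Omega}\|\mtx G\z\|$ is $1$-Lipschitz with respect to the Frobenius norm (because every $\z\in\Omega$ is a unit vector), so Gaussian concentration gives, for every $t>0$,
\[
\Pr\Big(\min_{\z\in\Omega}\|\mtx G \z\| \leq \lambda_n - \omega(\Omega) - t\Big) \leq \exp(-t^2/2).
\]
Using the standard estimate $\lambda_n \geq n/\sqrt{n+1}$, the mean term $\lambda_n - \omega(\Omega)$ is positive as soon as $\omega(\Omega)^2 < n^2/(n+1)$, i.e.\ once $n \geq \omega(\Omega)^2 + \mathcal{O}(1)$. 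Choosing $t$ to be a fixed fraction of the resulting gap $\lambda_n - \omega(\Omega)$ makes the right-hand side exponentially small while keeping $\lambda_n - \omega(\Omega) - t > 0$; this is precisely where the additive $\mathcal{O}(1)$ slack in the hypothesis is spent. On that event $m(\mtx\Phi)>0$, so $\mathrm{null}(\mtx\Phi)\cap\mathcal{T}_\A(\vx^\star)=\{\vct 0\}$ and $\vx^\star$ is the unique optimum of (\ref{minAnorm}).

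The main obstacle is Gordon's comparison inequality itself: everything else (the conic reduction to $\Omega$, the Lipschitz concentration bound, and the arithmetic relating $\lambda_n$ to $\sqrt n$) is routine, but the lower bound on $\E[\min_{\z\in\Omega}\|\mtx G\z\|]$ genuinely requires the Gaussian-process comparison machinery of Slepian and Gordon, together with care in tracking constants so that the threshold emerges as $\omega(\Omega)^2$ rather than some larger multiple of it.
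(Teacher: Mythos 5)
The paper itself gives no proof of this proposition---it is imported verbatim from \cite{venkat}---and your argument reconstructs exactly the proof used in that reference: the nullspace/tangent-cone characterization (Proposition 2.1 of \cite{venkat}), Gordon's comparison inequality $\E\left[\min_{\vct z\in\Omega}\|\mtx G\vct z\|\right]\geq \lambda_n-\omega(\Omega)$, Gaussian concentration for the $1$-Lipschitz minimum-gain functional, and the estimate $\lambda_n\geq n/\sqrt{n+1}$. The only imprecision is your claim that the failure probability becomes ``exponentially small'' once $n\geq\omega(\Omega)^2+\mathcal{O}(1)$ with a fixed constant: the concentration bound actually yields success probability $1-\exp\left(-\tfrac12(\lambda_n-\omega(\Omega))^2\right)$, which tends to one only when the slack $n-\omega(\Omega)^2$ grows, but this is the same looseness already baked into the proposition's ``$\mathcal{O}(1)$ / high probability'' phrasing (stated with the explicit probability in \cite{venkat}), so your proof is as faithful as the statement permits.
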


To complete our problem setup we will also restate Proposition 3.6 in \cite{venkat} :
\begin{proposition} ( \cite{venkat}, Proposition 3.6)
\label{prop3.6}
Let $C$ be any non-empty convex cone in $\R^p$, and let $\vct{g} \sim \N(0,I)$ be a Gaussian vector. Then:
\begin{equation}
\label{propVenkat}
\omega(C \cap \mathbb{S}^{p-1}) \leq \E_{\vct{g}}[\mathrm{dist}(\vct{g},C^*)]
\end{equation}
where $\mathrm{dist}( . , .)$ denotes the Euclidean distance between a point and a set, and $C^*$ is the dual cone of $C$ 
\end{proposition}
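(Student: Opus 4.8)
The plan is to establish the bound pointwise in the Gaussian vector $\vct{g}$ and only integrate at the very end, so that the randomness plays no role beyond the final expectation. Concretely, I would show that for every fixed realization $\vct{g} \in \R^p$,
\[
\sup_{\z \in C \cap \mathbb{S}^{p-1}} \vct{g}^T \z \leq \mathrm{dist}(\vct{g}, C^*),
\]
and then apply $\E_{\vct{g}}[\cdot]$ to both sides, recognizing the expectation of the left-hand side as exactly $\omega(C \cap \mathbb{S}^{p-1})$ by the definition of the Gaussian width.

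The main tool is Moreau's decomposition for a closed convex cone. First I would, passing to the closure of $C$ if necessary, take $C$ to be closed. Writing $\Pi_C$ and $\Pi_{C^*}$ for the Euclidean projections onto $C$ and onto the polar (dual) cone $C^* = \{\vct{u} : \langle \vct{u}, \z\rangle \leq 0 ~\forall \z \in C\}$ — the same orientation used in the normal cone definition (\ref{NconeD}) — Moreau's theorem supplies the orthogonal splitting $\vct{g} = \Pi_C(\vct{g}) + \Pi_{C^*}(\vct{g})$ with $\langle \Pi_C(\vct{g}), \Pi_{C^*}(\vct{g})\rangle = 0$. An immediate consequence is the identity $\mathrm{dist}(\vct{g}, C^*) = \|\vct{g} - \Pi_{C^*}(\vct{g})\| = \|\Pi_C(\vct{g})\|$, which rewrites the right-hand side of the target bound as the norm of the projection onto $C$ itself.

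With the decomposition in hand, for any $\z \in C$ with $\|\z\| = 1$ I would split the inner product as
\[
\langle \vct{g}, \z\rangle = \langle \Pi_C(\vct{g}), \z\rangle + \langle \Pi_{C^*}(\vct{g}), \z\rangle.
\]
Since $\Pi_{C^*}(\vct{g}) \in C^*$ and $\z \in C$, the second term is nonpositive by the very definition of the polar cone, hence $\langle \vct{g}, \z\rangle \leq \langle \Pi_C(\vct{g}), \z\rangle$. Cauchy--Schwarz together with $\|\z\| = 1$ then yields $\langle \vct{g}, \z\rangle \leq \|\Pi_C(\vct{g})\| = \mathrm{dist}(\vct{g}, C^*)$. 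Taking the supremum over $\z \in C \cap \mathbb{S}^{p-1}$ preserves this inequality, and taking expectations over $\vct{g}$ delivers the claim.

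The step I expect to be most delicate is bookkeeping rather than depth: one must fix the sign convention so that $\langle \Pi_{C^*}(\vct{g}), \z\rangle \leq 0$ (matching the polar-cone convention of (\ref{NconeD})), and one must justify invoking Moreau's decomposition, which requires $C$ to be closed and convex. The reduction to the closure is harmless, since $C \cap \mathbb{S}^{p-1}$ and $\overline{C} \cap \mathbb{S}^{p-1}$ yield the same supremum of the continuous functional $\z \mapsto \vct{g}^T\z$ by continuity and density, and $\overline{C}^* = C^*$. No concentration or tail estimate is needed anywhere; the only probabilistic content of the argument is the single expectation taken at the end.
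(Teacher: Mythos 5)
Your proof is correct: the Moreau decomposition identity $\mathrm{dist}(\vct{g},C^*)=\|\Pi_C(\vct{g})\|$, the sign convention for $C^*$ matching the normal-cone definition (\ref{NconeD}), and the reduction to a closed cone (using $\overline{C}^*=C^*$ and density of $C\cap\mathbb{S}^{p-1}$ in $\overline{C}\cap\mathbb{S}^{p-1}$) are all handled properly, and the only probabilistic step is the final expectation. Note that this paper does not prove the proposition at all---it is quoted directly from \cite{venkat}---and your argument is essentially the standard convex-duality proof given in that reference, so there is no divergence to report.
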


We can then square (\ref{propVenkat}) use Jensen's inequality to obtain
\begin{equation}
\label{jensen}
\omega(C \cap \mathbb{S}^{p-1})^{2} \leq \E_g[\mbox{dist}(\vct{g},C^*)^2]
\end{equation}
We note here that the dual cone of the tangent cone is the normal cone, and vice-versa.

Thus, to derive measurement bounds, we only need to calculate the square of the Gaussian width of the intersection of the tangent cone at $\vx^\star$ with respect to the atomic norm and the unit sphere. This value can be bounded by the distance of a Gaussian random vector to the normal cone at the same point, as implied by  (\ref{jensen}). In the next section, we derive bounds on this quantity.

\section{Gaussian Width of the Normal Cone for Unions of Subspaces}
\label{sec:result}

For generic subspaces $\K$, we have

\begin{align}
\notag
&\vct{c} \in \N_\A(\vx^\star) \iff 
\exists \gamma \geq 0 ~\ :  \langle \vct{c}, \vx^\star \rangle = \gamma \|\vx^\star\|_{\A}, \\
\label{eqn:normalGen}
& \|\bar{\vct{c}}^i\|=\gamma \mbox{ if } K_i \in \K^\star ,~\  \|\bar{\vct{c}}^i\|\leq \gamma \text{ if } K_i\not\in \K^\star.
\end{align}

We now prove the main result of this paper, a sufficient number of Gaussian measurements needed to recover a signal lying in a union of subspaces:
\[\]

\begin{theorem}
\label{mainTh}
To exactly recover a $k$-subspace sparse signal decomposed into $M$ subspaces in $\R^p$, 
\[
2 \frac{\sigma_p^2(\mtx{K}^\star)}{\sigma_p^2(\mk)}(\sqrt{2\log(M-k)} + \sqrt{B})^2 k + 2kB \kappa^2(\mk)
\]
 i.i.d. Gaussian measurements are sufficient.
\end{theorem}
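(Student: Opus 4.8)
The plan is to invoke Proposition~\ref{corl:width} together with the inequality~\eqref{jensen}, which together reduce the statement to controlling a single scalar quantity, namely the expected squared distance $\E_{\vct g}[\mathrm{dist}(\vct g, \N_\A(\vx^\star))^2]$ from a standard Gaussian vector to the normal cone. Since any feasible point of the cone already furnishes an upper bound on the distance, the entire argument comes down to exhibiting a good point $\vct c \in \N_\A(\vx^\star)$ as a function of $\vct g$ and a free scale parameter $\gamma \ge 0$, bounding $\E_{\vct g}\|\vct g - \vct c\|^2$, and finally optimizing over $\gamma$.

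First I would read off the structure of the cone from~\eqref{eqn:normalGen} and the dual-norm formula~\eqref{dualdef}: a vector $\vct c$ lies in $\N_\A(\vx^\star)$ precisely when, for the active indices $i \in J$, the projection $K_i^T \vct c$ is pinned to the fixed direction $\gamma v_i$ (with $v_i$ the unit vector aligned with the signal's component in subspace $i$), while for inactive indices $j \notin J$ the only requirement is $\|K_j^T \vct c\| \le \gamma$. This splits the construction into a \emph{magnitude-estimation} part on the active subspaces and a \emph{detection} part on the inactive ones. On the inactive coordinates I would take $K_j^T \vct c$ to be the Euclidean projection of $K_j^T\vct g$ onto the ball of radius $\gamma$, so that the residual there is $\max(\|K_j^T\vct g\| - \gamma,\, 0)$; on the active coordinates the equality constraints force $K_i^T\vct c = \gamma v_i$, so the residual is $\|K_i^T\vct g - \gamma v_i\|$.

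Next I would convert these subspace-coordinate residuals back to the ambient norm $\|\vct g - \vct c\|$. Because the subspaces overlap and need not be orthogonal, the clean Pythagorean identity fails, and one must use the spectral inequality $\sigma_{\min}^2(\mk)\,\|\vw\|^2 \le \sum_{i=1}^M \|K_i^T \vw\|^2 \le \sigma_{\max}^2(\mk)\,\|\vw\|^2$, which follows from $\sum_i K_i K_i^T = \mk\mk^T$ having eigenvalues $\sigma_i^2(\mk)$. This is exactly the step that introduces the factor $\kappa^2(\mk)$, while a parallel spectral bound for the active block $\mtx{K}^\star$ accounts for the factor $\sigma_{\min}^2(\mtx{K}^\star)/\sigma_{\min}^2(\mk)$ in the first term (here $\sigma_{\min}$ is the smallest, written $\sigma_p$ in the statement). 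Having converted, I would bound the two pieces in expectation: for each active block $\E\|K_i^T\vct g - \gamma v_i\|^2 = d_i + \gamma^2 \le B + \gamma^2$ (the cross term vanishes since $\vct g$ is centered), giving a contribution of order $k(B+\gamma^2)$; for the inactive blocks I would use $K_j^T \vct g \sim \N(0, I_{d_j})$ to control $\E\sum_{j\notin J}\max(\|K_j^T\vct g\| - \gamma,\,0)^2$ by standard Gaussian concentration of the norm of a $(\le B)$-dimensional Gaussian, choosing $\gamma = \sqrt{2\log(M-k)} + \sqrt{B}$ so that this detection residual is absorbed into the lower-order $\mathcal{O}(1)$ term. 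Collecting the surviving pieces and substituting this $\gamma$ yields the claimed bound.

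The main obstacle is twofold. The first difficulty is geometric: constructing a single $\vct c$ that simultaneously meets the pinned equalities $K_i^T\vct c = \gamma v_i$ on \emph{overlapping} active subspaces (these constraints are coupled, since the $K_i$ share directions) while respecting the inactive inequalities, and then tracking the induced distortion through the singular values of $\mtx{K}^\star$ and $\mk$ so as to land on the precise constants rather than a cruder ratio. The second, and conceptually more important, difficulty is to see why overlap leaves the $\log(M-k)$ detection term untouched — the claimed universality. The key observation I would isolate is that, for orthonormal $K_j$, the marginal law of $K_j^T\vct g$ is $\N(0,I_{d_j})$ regardless of how $\mathrm{span}(K_j)$ overlaps the other subspaces; overlap only alters the correlations between distinct blocks, and since the detection bound is a sum of per-block expectations it depends solely on these marginals. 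This is what makes the detection price insensitive to the orientation of the subspaces and to how much they intersect.
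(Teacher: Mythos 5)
Your high-level reduction (Gaussian width $\to$ expected squared distance to the normal cone $\to$ distance to a single constructed point in the cone) is the same as the paper's, but your construction of that point fails for exactly the configurations the theorem is about. You build $\vct{c}$ by prescribing its analysis coefficients $K_i^T\vct{c}$ on \emph{every} block independently: ball-projections of $K_j^T\vct{g}$ on inactive blocks, pinned values $\gamma v_i$ on active blocks. But when the subspaces overlap, $\sum_i d_i > p$, and the analysis map $\vct{c}\mapsto(K_1^T\vct{c},\ldots,K_M^T\vct{c})$ sends $\R^p$ onto a \emph{proper} subspace of $\R^{\sum_i d_i}$; after you shrink some components and pin others, the prescribed tuple generically corresponds to no vector $\vct{c}\in\R^p$ at all. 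You correctly flag this coupling as ``the main obstacle,'' but flagging it is not resolving it: every later step (the spectral conversion, the per-block expectations, the choice of $\gamma$) presupposes that $\vct{c}$ exists. The paper sidesteps the issue entirely by never prescribing analysis projections. It prescribes \emph{synthesis} coefficients, which can be chosen freely since any $\bar{\vr}$ defines a legitimate vector $\vr=\mk\bar{\vr}$: it copies the Gaussian's inactive coefficients exactly, $\br^{S^c}=\bw^{S^c}$, so the inactive residual is identically zero and no detection/shrinkage bound is ever needed, and it sets the active block to $t(\vw)\bv^S$, where $\vv$ is a fixed dual certificate supported on the active subspaces and the \emph{random} scale $t(\vw)=\max_{i\notin J}\|\bw^i\|$ is precisely what forces membership in (\ref{eqn:normalGen}); contrast this with your deterministic $\gamma$.

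This difference is not cosmetic; it is where the stated constants come from. With the paper's construction only the active-block residual survives, giving $\E\|\vr-\vw\|^2 \le 2\,\E[t(\vw)^2]\,\|\vv\|^2 + 2\,\E\|\mk^S\bw^S\|^2$, and the three factors in the theorem arise from three specific ingredients: $\E[t(\vw)^2]\le \sigma_p^{-2}(\mk)\,(\sqrt{2\log(M-k)}+\sqrt{B})^2$ because the $\|\bw^i\|^2$ are \emph{scaled} $\chi^2$ variables (Lemma \ref{lemma:chisquos} together with (\ref{wchi})) --- note that $\bw^i=\mk_i^T(\mk\mk^T)^{-1}\vw$ is not a standard Gaussian, so your universality argument via the standard marginals of $K_j^T\vct{g}$ is not the mechanism operating here; $\|\vv\|^2\le k\,\sigma_p^2(\mtx{K}^\star)$ by Lemma \ref{lemma:ballbound}; and $\E\|\mk^S\bw^S\|^2\le kB\,\kappa^2(\mk)$ by the trace computation of Appendix \ref{appc}. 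Your accounting --- a single global conversion $\|\vct{g}-\vct{c}\|^2\le\sigma_p^{-2}(\mk)\sum_i\|K_i^T(\vct{g}-\vct{c})\|^2$ said to produce $\kappa^2(\mk)$, plus an unspecified ``parallel bound'' for the ratio $\sigma_p^2(\mtx{K}^\star)/\sigma_p^2(\mk)$ --- would multiply every term by $\sigma_p^{-2}(\mk)$ and matches neither the structure nor the provenance of the stated bound. For what it is worth, in the perpendicular/disjoint-group case your construction is well-defined (the blocks decouple and the Pythagorean identity holds), and there it cleanly recovers Theorem \ref{mainThglasso}, indeed without the paper's factors of $2$; but it does not extend to Theorem \ref{mainTh} without a genuinely new idea for the overlapping, non-orthogonal case.
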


To prove this result, we need two lemmas:
\[\]
\begin{lemma}\label{lemma:chisquos}
Let $q_1,\ldots,aq_L$ be $L$,  $\chi$-squared random variables with $d$-degrees of freedom.  Then \[
\E[\max_{1\leq i \leq L} q_i]\leq (\sqrt{2\log(L)} + \sqrt{d})^2.
\]
\end{lemma}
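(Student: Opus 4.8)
The plan is to use the exponential-moment (Chernoff / Laplace-transform, a.k.a. ``soft-max'') method. This is the natural tool here because it controls the maximum of the $q_i$ directly, without ever passing through square roots, so the \emph{squared} form of the bound emerges on its own; it also requires no independence among the $q_i$.

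First I would introduce a free parameter $\lambda \in (0,1/2)$ and exploit convexity of $t \mapsto e^{\lambda t}$. By Jensen's inequality,
\[
\exp\left(\lambda\, \E[\max_i q_i]\right) \le \E\left[\exp(\lambda \max_i q_i)\right] = \E\left[\max_i e^{\lambda q_i}\right] \le \sum_{i=1}^L \E\left[e^{\lambda q_i}\right],
\]
where the last step bounds a maximum of nonnegative terms by their sum and then uses linearity of expectation (hence the argument holds whether or not the $q_i$ are independent). Since each $q_i$ is $\chi^2$ with $d$ degrees of freedom, its moment generating function is $\E[e^{\lambda q_i}] = (1-2\lambda)^{-d/2}$ for $\lambda < 1/2$. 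Taking logarithms gives, for every $\lambda \in (0,1/2)$,
\[
\E[\max_i q_i] \le \frac{1}{\lambda}\left[\log L - \frac{d}{2}\log(1-2\lambda)\right].
\]

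The remaining work, and the only genuine obstacle, is to pick $\lambda$ so that the right-hand side collapses to exactly $(\sqrt{2\log L}+\sqrt d)^2$. I would substitute $\theta = 1-2\lambda \in (0,1)$, rewriting the bound as $\frac{2\log L + d\log(1/\theta)}{1-\theta}$, and then linearize the logarithm with the elementary inequality $-\log\theta \le (1-\theta)/\theta$, valid on $(0,1)$ (checked via monotonicity of $\theta \mapsto \log\theta - 1 + 1/\theta$). This produces the cleaner upper bound $\frac{2\log L}{1-\theta} + \frac{d}{\theta}$.

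Finally I would minimize $\frac{a}{1-\theta}+\frac{b}{\theta}$ with $a = 2\log L$ and $b = d$ over $\theta \in (0,1)$; the first-order condition yields the minimizer $\theta^\star = \sqrt b/(\sqrt a + \sqrt b)$, at which the expression equals $(\sqrt a + \sqrt b)^2 = (\sqrt{2\log L}+\sqrt d)^2$, completing the proof. The crux is entirely in this last optimization: the log-linearization is what makes the minimization exactly solvable and makes it land on the clean square rather than a looser constant. I expect this to be where care is needed, since a more naive route (bounding $\E[\max_i \sqrt{q_i}]$ by Gaussian concentration, using that $\|\vct g\|$ is $1$-Lipschitz, and then squaring) introduces an additive $\Theta(1)$ error from the variance term and therefore fails to reproduce the stated bound verbatim.
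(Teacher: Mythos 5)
Your proof is correct and follows essentially the same route as the paper: the soft-max/Chernoff argument via Jensen's inequality, bounding the max by the sum, the $\chi^2$ moment generating function $(1-2\lambda)^{-d/2}$, and then an optimal choice of the free parameter (your $\theta^\star = \sqrt{d}/(\sqrt{2\log L}+\sqrt{d})$ is exactly the paper's $t = [2+2\epsilon]^{-1}$ with $\epsilon = \sqrt{d/(2\log L)}$). The only difference is presentational: the paper simply asserts this choice of $t$ "yields" the bound, whereas you make explicit the log-linearization $\log(1/\theta)\le (1-\theta)/\theta$ and the minimization of $\frac{2\log L}{1-\theta}+\frac{d}{\theta}$ that justify it.
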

\[\]
We defer the proof to appendix \ref{appa}.
\[\]

\begin{lemma}\label{lemma:ballbound}
Suppose $\vv \in \R^p$ is supported on some set of groups $\K^\star \subset \K$. Then, 
\[
	\|\vv\| \leq \sqrt{|\K^\star|}~\ \sigma_p(\mtx{K}^\star)~\ \|\vv\|_{\A}^*\,.
\]
\end{lemma}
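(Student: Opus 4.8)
The plan is to bound $\|\vv\|$ by passing through the dual atomic norm (\ref{dualdef}) and the extreme singular values of $\mtx{K}^\star$. Because $\vv$ is supported on the groups in $\K^\star$, it decomposes as $\vv=\sum_{i\in J}K_i\bv^i$ and hence lies in the range of $\mtx{K}^\star$. I would assemble two ingredients: (i) the dual norm controls every per-group projection $K_i^{T}\vv$, so that stacking the $|\K^\star|$ active blocks costs only a factor $\sqrt{|\K^\star|}$; and (ii) since $\vv$ lies in the range of $\mtx{K}^\star$, the linear map $\vv\mapsto(\mtx{K}^\star)^{T}\vv$ cannot contract $\|\vv\|$ by more than the smallest nonzero singular value of $\mtx{K}^\star$.

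For (i), the dual-norm formula (\ref{dualdef}) gives $\|\vv\|_{\A}^{*}=\max_{1\le m\le M}\|K_m^{T}\vv\|$, so $\|K_i^{T}\vv\|\le\|\vv\|_{\A}^{*}$ for each $i\in J$. Since $(\mtx{K}^\star)^{T}\vv$ is the concatenation of the blocks $\{K_i^{T}\vv\}_{i\in J}$, I would estimate
\[
\|(\mtx{K}^\star)^{T}\vv\|^{2}=\sum_{i\in J}\|K_i^{T}\vv\|^{2}\le |\K^\star|\,\big(\|\vv\|_{\A}^{*}\big)^{2}.
\]
For (ii), I would take a thin SVD $\mtx{K}^\star=\mU\mS\mtx{W}^{T}$ and use $\vv\in\mathrm{range}(\mU)$: expressing $\vv$ in the left-singular-vector basis shows $\|(\mtx{K}^\star)^{T}\vv\|\ge\sigma_p(\mtx{K}^\star)\,\|\vv\|$, with $\sigma_p(\mtx{K}^\star)$ the smallest nonzero singular value. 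Dividing and combining with the previous display yields $\|\vv\|\le\sqrt{|\K^\star|}\,\frac{1}{\sigma_p(\mtx{K}^\star)}\,\|\vv\|_{\A}^{*}$, i.e. the asserted inequality with the extreme singular value of $\mtx{K}^\star$ entering in the denominator.

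The only non-routine point is (ii), together with the crucial use of the hypothesis that $\vv$ is supported on $\K^\star$: for a generic $\vv$ the operator $(\mtx{K}^\star)^{T}$ kills the orthogonal complement of $\mathrm{range}(\mtx{K}^\star)$ and no such lower bound can survive. Two subtleties need care. First, when the active subspaces overlap the columns of $\mtx{K}^\star$ are linearly dependent and the coefficients $\bv^i$ in $\vv=\sum_{i\in J}K_i\bv^i$ are not unique; the argument must therefore be run through $\vv$ itself (equivalently through the Gram matrix $(\mtx{K}^\star)^{T}\mtx{K}^\star$) rather than through a chosen coefficient vector, and $\sigma_p$ must be read as the smallest \emph{nonzero} singular value. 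Second, as a sanity check, for perpendicular non-overlapping subspaces $\mtx{K}^\star$ has orthonormal columns, $\sigma_p(\mtx{K}^\star)=1$, and the bound collapses to the clean $\|\vv\|\le\sqrt{|\K^\star|}\,\|\vv\|_{\A}^{*}$.
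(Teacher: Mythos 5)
Your proof is correct, and it is the mirror image of the paper's argument: the paper works on the primal side and then dualizes, while you work directly on the dual side. Concretely, the paper first reduces, by duality, to showing $\|z\|_\A \le \sqrt{|\K^\star|}\,\|(\mtx{K}^\star)^\dagger\|\,\|z\|$ for every $z$ supported on $\K^\star$; it then takes a coefficient representation $z=\mtx{K}^\star\vct{b}$, applies Cauchy--Schwarz across the active blocks ($\sum_i\|\vct{b}^i\|\le\sqrt{|\K^\star|}\,\|\vct{b}\|$, an $\ell_1\le\sqrt{k}\,\ell_2$ step on coefficients), and finally chooses $\vct{b}$ to be the minimum-norm solution of $z=\mtx{K}^\star\vct{b}$, i.e.\ the pseudoinverse applied to $z$. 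Your two ingredients are exactly dual to these: the stacking bound $\|(\mtx{K}^\star)^T\vv\|\le\sqrt{|\K^\star|}\,\|\vv\|_\A^*$ (an $\ell_2\le\sqrt{k}\,\ell_\infty$ step on the projections $K_i^T\vv$) replaces their Cauchy--Schwarz on coefficients, and your restricted lower bound $\|(\mtx{K}^\star)^T\vv\|\ge\sigma_{\min}\|\vv\|$ on $\mathrm{range}(\mtx{K}^\star)$ replaces their pseudoinverse step. Neither route is stronger; yours avoids the duality reduction and any choice of (non-unique) coefficients---which is exactly the overlap subtlety you flag---at the cost of needing the explicit dual-norm formula (\ref{dualdef}) and the observation that $\vv$ lies in the range of $\mtx{K}^\star$.

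One point you raise deserves emphasis: your bound carries $1/\sigma_{\min}(\mtx{K}^\star)$ (smallest \emph{nonzero} singular value), i.e.\ $\|(\mtx{K}^\star)^\dagger\|$, whereas the lemma as printed has $\sigma_p(\mtx{K}^\star)$ as a multiplier. Your reading is the right one: the paper's own step (iii) (minimum-norm $\vct{b}$ subject to $z=\mtx{K}^\star\vct{b}$) delivers exactly $\|\vct{b}\|\le\|z\|/\sigma_{\min}(\mtx{K}^\star)$, and under the literal reading ($\sigma_p$ = a singular value, per the paper's notation section) the stated inequality fails for nearly aligned subspaces: take $K_1=(1,0)^T$, $K_2=(\cos\theta,\sin\theta)^T$ in $\R^2$ and $\vv=(0,1)^T$, so that $\|\vv\|_\A^*=\sin\theta$ and $\sigma_{\min}=\sqrt{1-\cos\theta}$, making $\sqrt{2}\,\sigma_{\min}\sin\theta\to 0$ while $\|\vv\|=1$; with the reciprocal the bound $2\cos(\theta/2)\ge 1$ holds. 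The two readings coincide (both constants equal $1$) precisely in the perpendicular case, which is why the discrepancy is invisible in Theorem \ref{mainThglasso} and in the group-lasso specialization.
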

We defer the proof of this lemma to appendix \ref{appb}
\[\]

\begin{proof}[Proof of Theorem~\ref{mainTh}]

\emph{Intuition}:
Note that, from (\ref{jensen}), the Gaussian width of the intersection of the tangent cone at $\vx^\star$ with the unit sphere is bounded above by the expected euclidean distance between a random Gaussian vector and the normal cone at $\vx^\star$ (\ref{eqn:normalGen}). We can further bound this distance by the distance between a random Gaussian vector $\vct{g}$ and a particular vector $\vct{r} \in \N_\A(\vx^\star)$, as shown in (\ref{morebound}). We proceed to construct such a vector $\vct{r}$ and prove the result
\begin{equation}
\label{morebound}
\E_g[\mbox{dist}(\vct{g},C^*)^2] \leq \E_g[\mbox{dist}(\vct{g},\vct{r})^2] , ~\ \vct{r} \in \N_\A(\vx^\star)
\end{equation}
 
Now, since we assume $\mathrm{span}(\K) = \R^p$, we can write, for any vector $\vct\eta$, 
\[
\vct{\eta} = \mtx{K} \bar{\vct{\eta}} ~\  \bar{\vct\eta} \in \R^{\sum d_i}
\]

So, we have for $\vx^\star$,
\[
\vx^\star = \mk \bar{\vx^\star} = \mk^S\bar{\x^\star}^S + \mk^{S^c}\bar{\vx^\star}^{S^c}
\]
where $S \subset \{ 1,2, \ldots, \sum_i d_i \}$ is the indices of active coefficients of $\bar{\vx^\star}$. It is important to note that
 $\forall i \notin J, ~\ \vx^{\star i}$ is a sub vector of ${\vx^\star}^{S^c}$
 Also, note here that $\mk^{S^c}\bar{\vx^\star}^{S^c} = 0$.

Since the normal cone is nonempty, there exists a $\vv\in \mathcal{N}_{\mathcal{A}}(\vx^\star)$ with $\|\vv\|_{\mathcal{A}}^*=1$ and $\vv^i =0 ~\ \forall i \notin J$.  Since $\vv$ is in the normal cone, it will also satisfy $\langle \vv, \vx^\star \rangle = \|\vx^\star\|_\mathcal{A}$.  We will use this $\vv$ in our bound below.

Suppose $\vw \sim \N(0,\mtx{I_p})$ is a vector with i.i.d. Gaussian entries.  We then have

\begin{align*}
\vct{w} &= \mtx K \bar{\vct{w}} \\
&=  \mtx{K^S} \bar{\vct{w}}^S ~\ + ~\ \mtx{K^{S^c}} \bar{\vct{w}}^{S^c}
\end{align*}
Let $t(\vw) = \max_{i \notin J} \|w^i\|$. 

since $w = \sum_{i = i}^M K_i \bw^i$ , we have $\bw^i = \mk_i^T \left( \mk \mk^T \right)^{-1} \vw $, giving us $\bw^i  \sim \N(0, \mk_i^T \left( \mk \mk^T \right)^{-2} \mk_i)$. This means that 
\begin{equation}
\label{wchi}
\| \bw^i \|^2 \sim \| \left( \mk \mk^T \right)^{-2} \|  \chi^2_{d_i} 
\end{equation}
So, $\| \bw^i \|^2$ is a scaled $\chi^2$ random variable with $d_i$ degrees of freedom.  The scaling factor is merely $\sigma_p^{-2}(\mk)$. 

Let us now construct a vector $\vct{r} \in \N_\A(\vx^\star)$. We can write, as for $\vw$
\begin{align*}
\vct{r} &=\mtx{K^S} \bar{\vr}^S ~\ + ~\ \mtx{K^{S^c}} \bar{\vr}^{S^c}
\end{align*}
Now let $\bar\vr^S = t(\vw) \bar\vv^S$, and $\bar\vr^{S^c} = \bar\vw^{S^c}$

From  \eqref{eqn:normalGen}, and from our definition of $t(\vw)$, we have $\vr \in \N_\A(x^\star)$. Referring to  (\ref{jensen}), we now consider the expected squared distance between $\N_\A(\vx^\star)$ and $\vw$:

\begin{align*}
&\E[\mbox{dist}(\vw,C^*)^2]\\ 
&\leq \E[|| \vr - \vw ||^2] \\  
&= \E \left[ \left\| \mk^S \br^S +  \mk^{S^c} \br^{S^c}  - \mk^S \bw^S  + \mk^{S^c} \bw^{S^c} \right\|^2 \right] \\
&\stackrel{(i)}= \E \left[ \left\| \mk^S \br^S - \mk^S \bw^S \right\|^2 + \left\| \mk^{S^c} \br^{S^c} -  \mk^{S^c} \bw^{S^c} \right\|^2\right] \\
&= \E \left[ \left\|\mk^S \br^S - \mk^S \bw^S \right\|^2 \right] \\
&\stackrel{(ii)}= 2 \E \left[ \left\|\mk^S \br^S \right\|^2 \right] + 2 \E \left[\left\| \mk^S \bw^S  \right\|^2\right]\\
&= 2\E \left[ \left\| \mk^S t(\vw) \bv^S \right\|^2 \right] +2  \E \left[\left\|  \mk^S \bw^S  \right\|^2\right] \\
&\stackrel{(iii)}= 2 \E [t(\vw)^2] \left\| \mk^S \bv^S   \right\|^2 + 2 \E \left[  \left\| \mk^S \bw^S \right\|^2 \right] \\
&= 2 \E [t(\vw)^2] \left\| \mk \bv   \right\|^2 +  2 \E \left[  \left\| \mk^S \bw^S \right\|^2 \right] \\
&\stackrel{(iv)}= 2 \E [t(\vw)^2] \left\| \vv \right\|^2 +2 \E \left[  \left\| \mk^S \bw^S \right\|^2 \right] \\
&\stackrel{(v)}\leq2  k \left( \frac{\sigma_p(\mtx{K^\star})}{\sigma_p(\mk)} \right)^2 (\sqrt{2\log(M-k)} + \sqrt{B})^2 +2 \E \left[  \left\| \mk^S \bw^S \right\|^2 \right] \\
&\stackrel{(vi)}\leq 2k  \left( \frac{\sigma_p(\mtx{K^\star})}{\sigma_p(\mk)} \right)^2 (\sqrt{2\log(M-k)} + \sqrt{B})^2 + 2kB \kappa^2(\mk)
\end{align*}

Where (i) trivially follows because the indices in $S$ and $S^c$ are disjoint, (ii) follows from the result $\|a - b\|^2 \leq 2 (\|a\|^2 + \|b\|^2)$ (iii) follows from the fact that $\vv$ is deterministic. (iv) follows from the fact that $\bv$ is only supported on $S$ (v) follows from Lemma \ref{lemma:chisquos},  Lemma \ref{lemma:ballbound} and (\ref{wchi}). Finally, (vi) follows from bounding the last term as shown in appendix \ref{appc},  and noting that $|S| \leq kB$.
\end{proof}

\subsection{Remarks}
\begin{enumerate}
\item The most important thing to note from our result is that we pay no extra penalty in terms of the number of measurements needed when the subspaces overlap. Hence, we term our result ``universal". 

\item The $k B$ term in the bound is an upper-bound on the signal sparsity. In the case of highly overlapping subspaces, this value may be much larger than the signal sparsity, but such cases seldom arise in real-world applications. If the subspace dimensions are vastly different, then it is pessimistic to bound the quantity with the maximum dimension $B$, but this yields a simple expression for the measurements needed.  It is of course possible to obtain tighter bounds using the techniques in our work for cases where the groups are of varying sizes. 

\item It can be seen from Theorem \ref{mainTh} that the number of measurements is linear in $k$ and $B$. Hence, the number of measurements that are sufficient for signal recovery grows linearly with the number of active subspaces in the signal, and also the maximum subspace dimension. This can be seen analogous to the linear dependence of the lasso bound on the sparsity $s$ of the signal.

\item We note that although we pay no extra price to measure the signal when there is significant overlap between subspaces, there is an additional cost in the recovery process of the signal, in that the subspaces need to first be separated by replication of the coefficients \cite{jacob}, or resort to a primal-dual method to solve the problem \cite{mosci}.

\item In the bound we get, 
 $\sigma_p(\mk)$ captures the price we pay if the subspaces are non perpendicular. Indeed, if subspaces are nearly aligned with each other, then it becomes nearly impossible to distinguish between them. This is reflected by the fact that 
 $\sigma_p(\mk) \rightarrow 0$ as the subspaces become more aligned. Fig. \ref{numsigs} shows this phenomenon for the case of two subspaces of one dimension each that become more and more aligned with each other.  Similarly, in the second term, the condition number of the matrix $\mtx{K}$ , $\kappa(\mtx{K})$ is determined by the angle between subspaces. The more the subspaces are close to each other, the higher the condition number, and subsequently the more measurements we need.  This can be seen from Fig. \ref{kappa}.
\end{enumerate}

\begin{figure}[!h]
\begin{center}
\subfigure[$\sigma_p(\mk)$ as the angle between 2 subspaces is varied]{
\includegraphics[scale = 0.35]{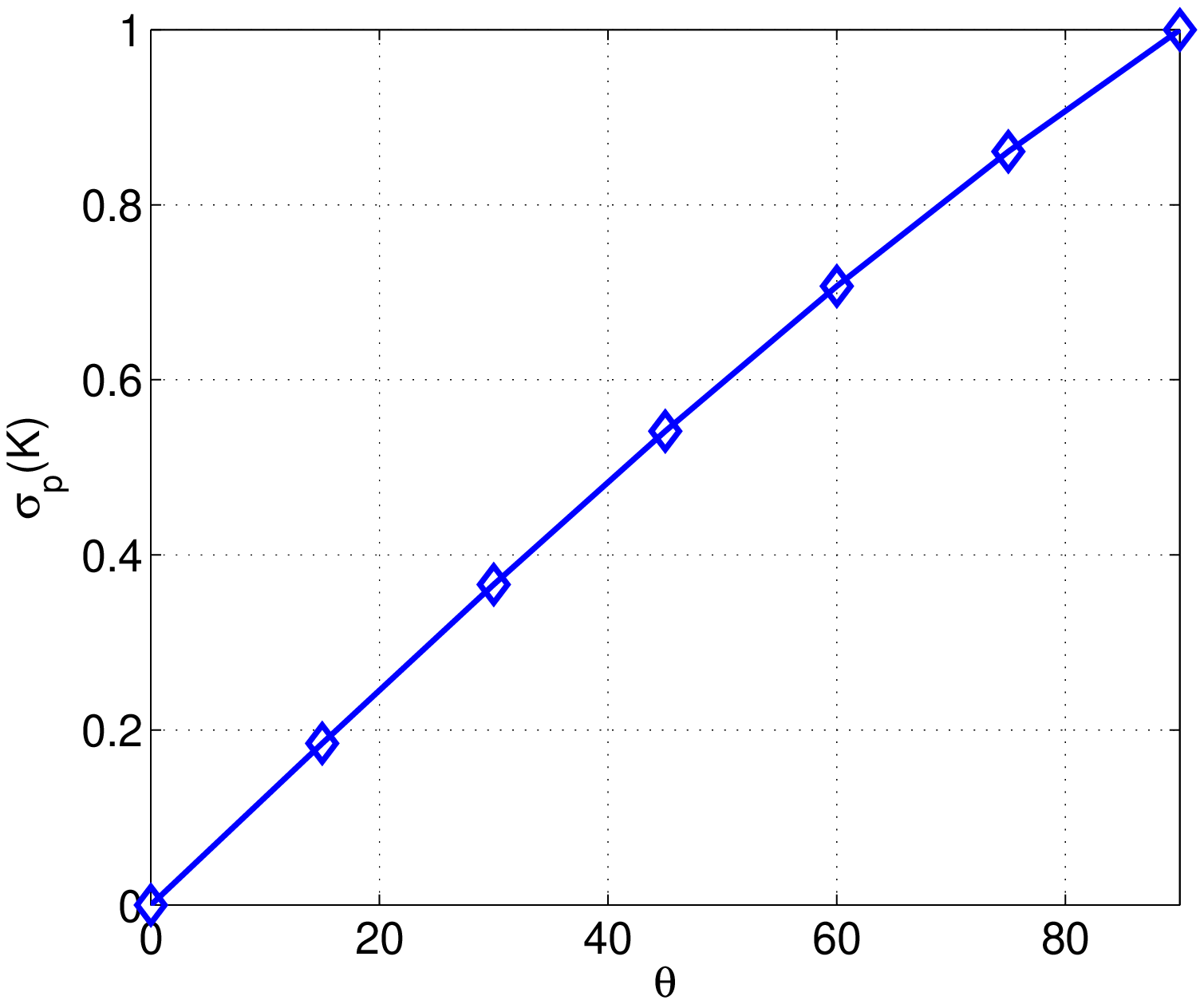}
\label{numsigs}}
\subfigure[$\kappa(\mk)$ as the angle between 2 subspaces is varied]{
\includegraphics[scale = 0.4]{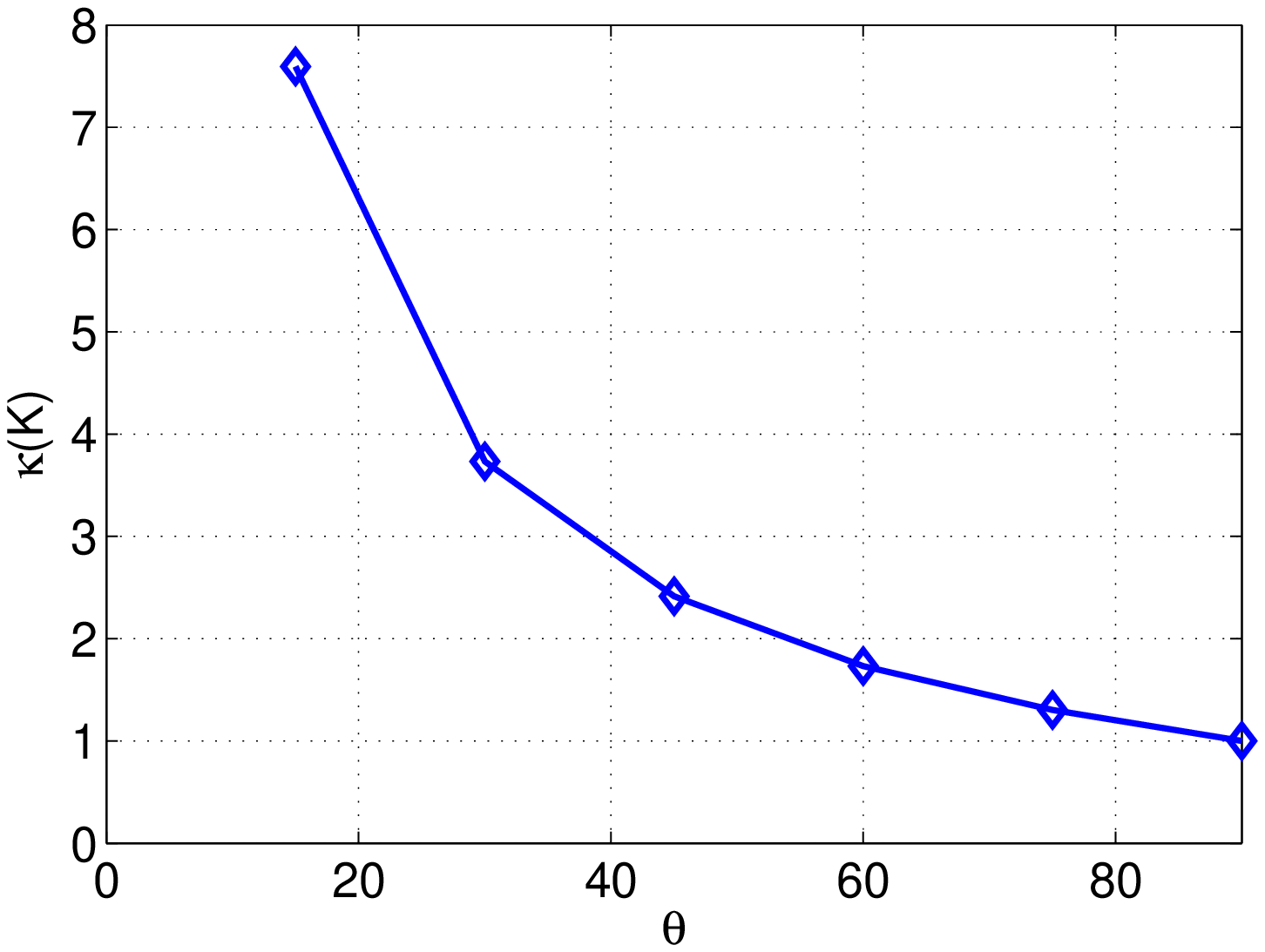}
\label{kappa}}
\caption{As the two subspaces become more separated ($\theta \rightarrow 90^\circ $), both the quantities approach 1. As ($\theta \rightarrow 0^\circ $), $\sigma_p(\mk) \rightarrow 0$ and $\kappa(\mk) \rightarrow \infty$, indicating that it becomes impossible to distinguish between active subspaces}.
\label{fig:constants}
\end{center}
\end{figure}



\subsection{Perpendicular Subspaces}
When the subspaces are perpendicular to each other, as defined in (\ref{eqperp}), we can make the bound we obtained much tighter. To see this, note that when the subspaces are perpendicular,  equation (ii) in the proof of Theorem \ref{mainTh} can be replaced by 
\[
\E[\|\mk^S\br^S\|^2] + \E[\|\mk^S\bw^S\|^2]
\]
This follows since, in the case of perpendicular subspaces, $\br^S$ is independent of $\bw^S$. Also, since the subspaces are perpendicular, we have
\[
\sigma_p(\mk^\star) ~\ = ~\ \sigma_p(\mk) ~\ = ~\ \kappa(\mk) ~\ = ~\ 1
\]
Substituting the values in the bound we get from Theorem \ref{mainTh}, we have
\begin{equation}
\label{perpresult}
\E[\mbox{dist}(\vw,C^*)] \leq k(\sqrt{2\log(M-k)} + \sqrt{B})^2 + kB
\end{equation}
for perpendicular subspaces. This is a much smaller quantity than the general result we obtained in Theorem \ref{mainTh}, underscoring the fact that recovery becomes easier as the subspaces get more and more separated. In the next section, we show that the group lasso with overlapping groups (also called the latent group lasso) is a special case of recovery when subspaces are perpendicular.

\section{The Group Lasso with Overlapping Groups}
\label{sec:glassoresult}
The group lasso with overlapping groups, \cite{jacob, latent, percival} can be formulated as an atomic norm minimization problem. In the group lasso problem, we are given a set of groups $\G = \{G_1, G_2, \ldots, G_M\}$, $G_i \subset \{1,2, \ldots p\}$ and we wish to recover a k-group sparse vector $\vx^\star$ from compressive measurements. In this case, we can define the subspaces $K_i \in \K$ as follows:

\begin{equation}
\label{extglasso}
\forall i \in \{1,2, \ldots, M\}, K_i = \mtx{I}^{G_i}
\end{equation}
where $\mtx{I}^{G_i}$ is the sub matrix of the identity matrix, consisting of columns indexed by the group $G_i$.

We now show that minimizing the atomic norm under the atomic set arising out of these subspaces yields the group lasso with overlapping groups. Note that, under the definition of the subspaces as in (\ref{extglasso}), and referring to (\ref{aset}), we have that $A_i$ is merely the unit sphere restricted to the dimensions indexed by group $G_i$.

\begin{lemma}
\label{lemogl}
Suppose the atomic set is given as in (\ref{aset}). Let $K_i = \mtx{I}^{G_i} ~\ \forall i = \{1,2, \ldots M\}$, where $G_i \subset \{1,2, \ldots p\}$. Then,  
\[
||\vx||_{\A} = \Omega_{overlap}^\G (\vx)
\]
where $\Omega_{overlap}^\G (\vx)$ is the overlapping group lasso norm defined in \cite{jacob}.
\end{lemma}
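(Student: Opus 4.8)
The plan is to reduce the general atomic-norm formula of Lemma~\ref{lemanormform} to the latent (overlapping) group lasso norm by a direct change of variables. Recall that the norm $\Omega_{overlap}^\G$ of \cite{jacob} is the \emph{latent} decomposition norm
\[
\Omega_{overlap}^\G(\vx) = \inf\left\{ \sum_{i=1}^M \|\vct{v}^i\| ~:~ \vx = \sum_{i=1}^M \vct{v}^i,~ \mathrm{supp}(\vct{v}^i) \subseteq G_i \right\},
\]
where each latent piece $\vct{v}^i \in \R^p$ is constrained to be supported on the group $G_i$. My goal is to show that the atomic-norm infimum over decompositions $\vx = \sum_i K_i \vct\alpha^i$ coincides with this infimum once $K_i = \mtx{I}^{G_i}$.

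First I would invoke Lemma~\ref{lemanormform}, which gives $\|\vx\|_\A = \min_{\vx = \sum_i K_i\vct\alpha^i} \sum_i \|\vct\alpha^i\|$, and substitute $K_i = \mtx{I}^{G_i}$. Under this choice $K_i\vct\alpha^i$ is precisely the vector in $\R^p$ whose entries on the coordinates indexed by $G_i$ equal $\vct\alpha^i$ and which vanishes elsewhere; call it $\vz^i = \mtx{I}^{G_i}\vct\alpha^i$, so that $\mathrm{supp}(\vz^i) \subseteq G_i$. The second (and essentially only) ingredient is the isometry property: since the columns of $\mtx{I}^{G_i}$ are distinct standard basis vectors, $(\mtx{I}^{G_i})^T \mtx{I}^{G_i} = \mtx{I}_{d_i}$, and hence $\|\vz^i\| = \|\mtx{I}^{G_i}\vct\alpha^i\| = \|\vct\alpha^i\|$. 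This is exactly the fact $[(K_i)^T K_i]^{-1} = \mtx{I}$ already noted after (\ref{dualdef}).

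Next I would argue that the two feasible sets are in norm-preserving bijection. The map $\vct\alpha^i \mapsto \vz^i = \mtx{I}^{G_i}\vct\alpha^i$ is a bijection from $\R^{d_i}$ (with $d_i = |G_i|$) onto $\{\vz \in \R^p : \mathrm{supp}(\vz) \subseteq G_i\}$, with inverse given by reading off the entries on $G_i$. Consequently every decomposition $\vx = \sum_i K_i\vct\alpha^i$ corresponds to a latent decomposition $\vx = \sum_i \vz^i$ with $\mathrm{supp}(\vz^i)\subseteq G_i$ and $\sum_i\|\vct\alpha^i\| = \sum_i \|\vz^i\|$, and conversely. Taking the infimum on both sides over these identified feasible sets yields $\|\vx\|_\A = \Omega_{overlap}^\G(\vx)$.

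I expect no real obstacle here: the content is the norm-preservation step, which follows immediately from the orthonormality of the columns of $\mtx{I}^{G_i}$. The only points requiring a line of care are checking that the two infima genuinely range over identical feasible sets (so that neither can be strictly smaller than the other) and confirming that the sign bookkeeping matches, since the pieces $\vct{v}^i$ and the coordinates $\vct\alpha^i$ are both unconstrained in sign; both are handled by the bijection above.
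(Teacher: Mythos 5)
Your proof is correct and takes essentially the same approach as the paper's: both arguments are a norm-preserving change of variables identifying atomic decompositions with latent decompositions $\vx = \sum_i \vz^i$, $\mathrm{supp}(\vz^i)\subseteq G_i$, using $\|\mtx{I}^{G_i}\vct\alpha^i\| = \|\vct\alpha^i\|$. The only cosmetic difference is that you route through Lemma~\ref{lemanormform}, whereas the paper substitutes $\vv_G = c_G\va$ directly into the definition~(\ref{anormdef}); since Lemma~\ref{lemanormform} is itself proved by that very substitution, the content is identical.
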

\begin{proof}
In (\ref{anormdef}), we can substitute $\vv_G = c_G  \va$, giving us $c_G = |c_G| \cdot ||\va|| = || c_G \va|| = \|\vv_G\|$. Hence, 
\begin{align*}
||\vx||_{\A} &= \inf \left\{ \sum_{\va \in \A} c_a : \vx =  \sum_{\va \in \A} c_a\va ~\  c_a \geq 0 ~\  \forall \va \in \A \right\} \\
&= \inf \left\{ \sum_{G \in \G} ||\vv_G|| ~\ : \vx = \sum_{G \in \G} \vv_G \right\} \\
&= \Omega_{overlap}^\G(\vx)  
\end{align*}
\end{proof}

\begin{corollary}
\label{corrgl}
Under the atomic set defined in (\ref{aset}), when $K_i = \mtx{I}^{G_i} ~\ \forall i = \{1,2, \ldots M\}$,
\[
||\vx||_{\A} = \sum_{G \in \G} ||\vx_G||
\]
\end{corollary}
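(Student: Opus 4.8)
The plan is to read this off as an immediate specialization of Lemma~\ref{lemogl} to the case in which the groups in $\G$ are pairwise disjoint, i.e. they partition the coordinates on which $\vx$ is supported. In that situation the latent decomposition appearing in $\Omega_{overlap}^\G$ collapses to a single feasible point, and the overlapping group-lasso norm reduces to the ordinary group-lasso norm $\sum_{G \in \G}\|\vx_G\|$. No new estimate beyond Lemma~\ref{lemogl} is required; the whole content is a uniqueness-of-decomposition argument.

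First I would invoke Lemma~\ref{lemogl}, which already establishes that under the choice $K_i = \mtx{I}^{G_i}$ the atomic norm equals
\[
\|\vx\|_\A = \Omega_{overlap}^\G(\vx) = \inf\left\{ \sum_{G \in \G}\|\vv_G\| ~:~ \vx = \sum_{G \in \G}\vv_G,~ \vv_G \text{ supported on } G \right\}.
\]
So the only thing left to show is that this infimum equals $\sum_{G \in \G}\|\vx_G\|$, where $\vx_G$ is the restriction of $\vx$ to the coordinates of $G$ (zero-padded elsewhere).

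Next I would use disjointness to pin down the decomposition. Since the groups partition the coordinates, every index $j$ lies in at most one group. Consequently, for any feasible decomposition $\vx = \sum_G \vv_G$ with $\mathrm{supp}(\vv_G) \subseteq G$, reading off coordinate $j$ forces $(\vv_G)_j = x_j$ when $j \in G$ and $(\vv_G)_j = 0$ otherwise; that is, $\vv_G = \vx_G$ is the unique feasible choice. The feasible set is therefore a singleton, the infimum is attained there, and $\|\vx\|_\A = \sum_{G \in \G}\|\vx_G\|$, as claimed.

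The one place that requires care — and the only real obstacle — is that this equality is special to the non-overlapping (partition) case: it is disjointness that makes the latent decomposition unique. If the groups genuinely overlap, a coordinate shared by several groups can be split among them, the feasible set is no longer a singleton, and in general one has the strict inequality $\Omega_{overlap}^\G(\vx) < \sum_{G \in \G}\|\vx_G\|$ (already visible when one group is strictly contained in another). The proof must therefore explicitly rely on the partition structure to reduce the infimum in Lemma~\ref{lemogl} to the single canonical decomposition $\vv_G = \vx_G$.
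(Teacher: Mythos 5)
Your proof is correct and takes essentially the same route as the paper: the paper's entire proof of Corollary~\ref{corrgl} is the one-line observation that $\Omega_{overlap}^\G(\vx) = \sum_{G \in \G}\|\vx_G\|$ in the non-overlapping case, which is precisely your specialization of Lemma~\ref{lemogl}. You merely make explicit the uniqueness-of-decomposition argument (and the caveat that disjointness is what collapses the infimum to the canonical choice $\vv_G = \vx_G$) that the paper leaves implicit.
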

\begin{proof}
$\Omega_{overlap}^\G (\vx) = \sum_{G \in \G} ||\vx_G||$ in the non overlapping case. 
\end{proof}
\[\]
Thus, (\ref{minAnorm}) yields:
\begin{equation}
\label{minAnormov}
\hat{\vx} = \underset{\x \in \R^p}{\operatorname{argmin}} ~\ \Omega_{overlap}^\G(\x)  ~\ \textbf{s.t. } \y = \mtx\Phi \x
\end{equation}
which can be solved using  \cite{jacob}.

It is not hard to see that, in the case of disjoint groups, 
\begin{align}
\label{disjointcone}
\mathcal{N}_\A(\vx^\star) &= \{ \vz \in \mathbb{R}^p : \vz_i = \gamma \frac{(\vx^\star)_i}{||\vx^\star_G||}  ~\ \forall G \in \G^\star , \\  \notag &~\   ||\vz_G|| \leq \gamma ~\ \forall G ~\notin ~\G^\star , \gamma  \geq 0 \} \end{align}
However, in the case of overlapping groups, as in the case of generic subspaces, no such closed form exists.  

Under the group sparsity model defined above, it is not hard to see that $\sigma_p(\mtx{K^\star}) = \kappa(\mk) = 1$. Also, note that the subspaces are perpendicular to each other since the basis vectors of each subspace is aligned with one or more of the coordinate axes in $\R^p$. As a consequence, we obtain the following result:
\[\]
\begin{theorem}
\label{mainThglasso}
To exactly recover a $k$-group sparse signal decomposed into $M$ groups in $\R^p$, the following is a sufficient number of Gaussian measurements needed:
\[
(\sqrt{2\log(M-k)} + \sqrt{B})^2 k + kB
\] 
\end{theorem}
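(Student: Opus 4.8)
The plan is to specialize the general measurement bound of Theorem~\ref{mainTh}, but to route the argument through the sharper perpendicular-subspace estimate \eqref{perpresult} rather than through the raw Theorem~\ref{mainTh} bound. The first step is to verify that the coordinate subspaces $K_i = \mtx{I}^{G_i}$ of \eqref{extglasso} really are perpendicular in the sense of \eqref{eqperp}: for any two groups $G_i,G_j$ the complementary pieces $A_i$ and $A_j$ are spanned by the standard basis vectors indexed by $G_i\setminus G_j$ and $G_j\setminus G_i$, and these index sets are disjoint, so every such pair of basis vectors is orthogonal. Hence the perpendicular analysis applies, and step (ii) in the proof of Theorem~\ref{mainTh} may be replaced by its factor-of-two-free version: since $t(\vw)$ depends only on coordinates outside $J$ while $\bw^S$ depends on coordinates in $S$, the two are independent and $\E[\bw^S]=0$, so the cross term vanishes and $\E\|\mk^S\br^S - \mk^S\bw^S\|^2 = \E\|\mk^S\br^S\|^2 + \E\|\mk^S\bw^S\|^2$. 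This removes the leading factor of $2$ and lands us on \eqref{perpresult}.

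Next I would collapse the constants in the detection (first) term by re-examining Lemma~\ref{lemma:ballbound} in the coordinate setting. If $\vv$ is supported on the active groups then $\|\vv\|^2 = \sum_\ell v_\ell^2 \le \sum_{i\in J}\|\vv_{G_i}\|^2 \le k\,\max_i\|\vv_{G_i}\|^2 = k\,(\|\vv\|_\A^*)^2$, so the effective constant $\sigma_p(\mk^\star)$ may be taken to be $1$. Combined with Lemma~\ref{lemma:chisquos} and the chi-squared control \eqref{wchi}, the first term becomes $k(\sqrt{2\log(M-k)}+\sqrt{B})^2$.

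For the magnitude-estimation (second) term I would avoid the crude $\kappa^2(\mk)$ factor and bound $\E\|\mk^S\bw^S\|^2$ directly. Because $\mk\mk^T = \sum_i K_iK_i^T$ is the diagonal matrix whose $\ell$-th entry is the number $\mathrm{mult}_\ell$ of groups containing index $\ell$, each coordinate of $\mk^S\bw^S$ has the form $(m_\ell/\mathrm{mult}_\ell)\,w_\ell$ with $m_\ell\le \mathrm{mult}_\ell$, whence $\E\|\mk^S\bw^S\|^2 = \sum_{\ell\in S}(m_\ell/\mathrm{mult}_\ell)^2 \le |S| \le kB$. Substituting both terms into \eqref{perpresult} then yields exactly the claimed bound $(\sqrt{2\log(M-k)}+\sqrt{B})^2 k + kB$.

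The main obstacle I anticipate is precisely the overlapping case. When groups share indices, $\mk = [K_1\cdots K_M]$ has repeated columns, so one cannot literally assert $\sigma_p(\mk)=\kappa(\mk)=1$; indeed $\kappa(\mk)=\sqrt{\max_\ell\mathrm{mult}_\ell/\min_\ell\mathrm{mult}_\ell}$ can be large. The delicate point is therefore to show the second term nonetheless stays bounded by $kB$, and the resolution above is to exploit the diagonal structure of $\mk\mk^T$ and compute the relevant expectation exactly, sidestepping the loose $\kappa^2(\mk)$ estimate. Everything else reduces to a routine substitution of the perpendicular constants into \eqref{perpresult}.
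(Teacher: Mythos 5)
Your proposal is correct and follows the same overall route as the paper: verify that the coordinate subspaces $K_i = \mtx{I}^{G_i}$ of \eqref{extglasso} are perpendicular in the sense of \eqref{eqperp}, drop the factor of two in step (ii) of the proof of Theorem~\ref{mainTh} because the cross term vanishes, and read the bound off the perpendicular-subspace estimate \eqref{perpresult}. Indeed, the paper's entire ``proof'' of Theorem~\ref{mainThglasso} is that specialization, justified by the one-line assertion that $\sigma_p(\mtx{K}^\star) = \kappa(\mk) = 1$ together with perpendicularity. Where you genuinely depart from the paper---and improve on it---is in your treatment of the overlapping case. Your objection is well founded: for overlapping groups $\mk\mk^T = \sum_i \mtx{I}^{G_i}\bigl(\mtx{I}^{G_i}\bigr)^T = \diag(\mathrm{mult}_\ell)$, so the nonzero singular values of $\mk$ are $\sqrt{\mathrm{mult}_\ell}$ and $\kappa(\mk) = \sqrt{\max_\ell \mathrm{mult}_\ell/\min_\ell \mathrm{mult}_\ell} > 1$ whenever the multiplicities are unequal; the paper's claim $\kappa(\mk)=1$ is simply false there, and the appendix~\ref{appc} bound $\E\bigl[\|\mk^S\bw^S\|^2\bigr] \leq \kappa^2(\mk)\,|S|$ does not by itself yield the stated $kB$ term. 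Your direct computation---each coordinate of $\mk^S\bw^S$ equals $(m_\ell/\mathrm{mult}_\ell)\,w_\ell$ with $m_\ell \leq \mathrm{mult}_\ell$, and at most $kB$ coordinates are covered by active groups---is exactly the right repair, and your coordinate-level verification of the Lemma~\ref{lemma:ballbound} constant likewise replaces an unproved assertion of the paper. So your proof is actually valid for the latent group lasso with overlaps, which is the case the theorem is about, whereas the paper's justification as written only covers groups with equal multiplicities (e.g., a partition).

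One residual imprecision, which you inherit from the paper: the independence claim used to kill the cross term is not literally true under overlap. An \emph{inactive} group may share coordinates with active groups, so $t(\vw) = \max_{i \notin J}\|\bw^i\|$ does depend on Gaussian coordinates inside the active support, and $t(\vw)$ and $\bw^S$ need not be independent. The conclusion still holds, but for a different reason: $t(\vw)$ depends on each individual coordinate $w_\ell$ only through $w_\ell^2$, hence is an even function of each $w_\ell$, while the cross term
\[
\E\Bigl[\,t(\vw)\,\bigl\langle \mk^S \bv^S,\; \mk^S\bw^S \bigr\rangle\Bigr]
= \sum_\ell \frac{m_\ell}{\mathrm{mult}_\ell}\, v_\ell\, \E\bigl[\,t(\vw)\, w_\ell\,\bigr]
\]
is odd in each $w_\ell$; the sign-symmetry of the Gaussian then forces every term to vanish. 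With that substitution for your independence argument (and the cosmetic fix that your sum $\sum_{\ell \in S}$ should run over coordinates covered by active groups rather than over coefficient indices in $S$), the proof is complete and tighter in its justification than the paper's own.
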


\subsection{Remark: Comparison with the lasso}
 We compare the group lasso bound we obtain to the standard lasso measurement bound: 
\begin{equation}
\label{venkatbound}
(2s+1)\log(p-s)
\end{equation}
The bound we obtain in Theorem \ref{mainThglasso} can be upper bounded by
\begin{equation}
\label{ourbound}
2k  \max\{2\log(M), B\} + kB
\end{equation}

Noting that $s \leq kB$ with equality when the groups do not overlap. In this case, (\ref{ourbound}) evaluates to
\begin{align*} 
&~\  ~\ \frac{2s}{B}  \max\{2\log(M), B\} + s \\
&= (2s + 1)\frac{\max\{2\log(M), B\}}{B}
\end{align*}
which is smaller than the lasso bound (\ref{venkatbound}) by a factor of roughly $\frac{\log(M)}{B\log(p)}$. So, in most cases, our bound shows that the we can perform better than the conventional lasso by exploiting the additional group structured information that is available.

\section{Approximately Sparse Signals}
\label{sec:approx}
The result we proved in Theorem \ref{mainTh} apply to exact reconstruction of  $k-$ subspace sparse signals. In many cases however, the signals are not exactly subspace (or group) sparse, but approximately so. Specifically, in a context we are especially interested in, the ordered wavelet coefficients of natural images exponentially decay to zero, and hence can be modeled as \emph{approximately} sparse, the ``sparsity" meaning that only few coefficients are of significant magnitude. 

In such cases, we can model the approximately sparse signal $\vct{f}^\star$ as 
\[
\vct{f}^\star = \vx^\star + \vct{h}^\star
\]
where $\vx^\star$ is a $k-$ subspace sparse approximation of $\vct{f}^\star$, retaining the $k$ subspaces having largest norm and $\vct{h}^\star$ corresponds to the remaining coefficients that are small in magnitude. Clearly, we can bound $\|\vct{h}^\star\|$ above by some constant, say $c_h$.

Now, measuring the approximately sparse signal $\vct{f}^\star$ using a Gaussian measurement matrix amounts to
\begin{align*}
\mtx\Phi \vct{f}^\star &= \Phi \vx^\star + \Phi \vct{h}^\star \\
              &= \mtx\Phi \vx^\star + \vct\theta
\end{align*}
Since the norm of $\mtx\Phi$ is bounded, we can write 
\[
\|\vct\theta\| \leq \delta
\]
The results we have obtained thus far can be easily extended to the case where we obtain such bounded noisy observations. In the noisy case, we observe 
\[
\vy = \mtx\Phi \vx^\star + \vct\theta , ~\  \|\vct\theta\| \leq \delta
\]
We then solve the atomic norm minimization problem, with a relaxed constraint to take into account the bounded noise:
\begin{equation}
\label{minAnormNoise}
\hat{\vx} = \underset{\vx \in \R^p}{\operatorname{argmin}} ||\vx||_\A  ~\ \textbf{s.t. } \|\vy - \mtx\Phi \vx\| \leq \delta
\end{equation}
We restate corollary $3.3$ from \cite{venkat}:

\begin{proposition} \label{corl:widthnoisy}[\cite{venkat}, Corollary 3.3]
Let $\mtx\Phi: \R^p \rightarrow \R^n$ be a random map with i.i.d. zero-mean Gaussian entries having variance $1/n$.  Further let $\Omega = T_\A(\vx^*) \cap \mathbb{S}^{p-1}$ denote the spherical part of the tangent cone $T_\A(\vx^\star)$.  Suppose that we have measurements $\vy = \mtx\Phi \vx^\star + \vct\theta$, and $\|\vct\theta\| \leq \delta$. Suppose we solve the convex program (\ref{minAnormNoise}). Let $\hat{\vx}$ denote the optimum of (\ref{minAnormNoise}). Also, suppose $\| \mtx\Phi \vz \| \geq \epsilon \|\vz\| ~\ \forall \vz \in T_\A(\vx^\star)$. Then $\|\vx^\star - \hat{\vx}\| \leq \frac{2\delta}{\epsilon}$  with high probability provided that
	\begin{equation*}
	n \geq \frac{\omega(\Omega)^2}{(1-\epsilon)^2}+\mathcal{O}(1).
	\end{equation*}
\end{proposition}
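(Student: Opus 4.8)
The plan is to split the argument into a short \emph{deterministic} part, which converts the lower isometry hypothesis $\|\mtx\Phi\vz\| \geq \epsilon\|\vz\|$ into the claimed $2\delta/\epsilon$ accuracy, and a \emph{probabilistic} part, which certifies that this isometry condition holds with high probability once $n$ exceeds the stated threshold. The probabilistic part reuses exactly the Gordon-width machinery already invoked for Proposition~\ref{corl:width}, so I would lean on~\cite{venkat} for the constant accounting rather than rederiving it.

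First I would establish feasibility of the truth. Since $\vy = \mtx\Phi\vx^\star + \vct\theta$ with $\|\vct\theta\| \leq \delta$, we have $\|\vy - \mtx\Phi\vx^\star\| = \|\vct\theta\| \leq \delta$, so $\vx^\star$ is feasible for (\ref{minAnormNoise}). Optimality of $\hat{\vx}$ then forces $\|\hat{\vx}\|_\A \leq \|\vx^\star\|_\A$, which means the error vector $\vz := \hat{\vx} - \vx^\star$ is a feasible descent direction and hence lies in the tangent cone $\mathcal{T}_\A(\vx^\star)$ by its definition (\ref{TconeD}). Next I would bound $\|\mtx\Phi\vz\|$ from above: both $\vx^\star$ and $\hat{\vx}$ satisfy the data-fidelity constraint, so $\|\vy - \mtx\Phi\vx^\star\| \leq \delta$ and $\|\vy - \mtx\Phi\hat{\vx}\| \leq \delta$, whence a triangle inequality on $\mtx\Phi\vz = (\vy - \mtx\Phi\vx^\star) - (\vy - \mtx\Phi\hat{\vx})$ gives $\|\mtx\Phi\vz\| \leq 2\delta$. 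Because $\vz \in \mathcal{T}_\A(\vx^\star)$, the hypothesized lower bound $\|\mtx\Phi\vz\| \geq \epsilon\|\vz\|$ applies, and chaining the two inequalities yields $\epsilon\|\vz\| \leq 2\delta$, i.e. $\|\vx^\star - \hat{\vx}\| \leq 2\delta/\epsilon$. This is the entire deterministic content of the claim.

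It then remains to verify that the lower isometry condition holds uniformly over $\mathcal{T}_\A(\vx^\star)$ with high probability whenever $n \geq \omega(\Omega)^2/(1-\epsilon)^2 + \mathcal{O}(1)$. By homogeneity it suffices to control $\min_{\vz \in \Omega}\|\mtx\Phi\vz\|$ on the spherical section $\Omega$. Writing $\mtx\Phi = \mtx{G}/\sqrt{n}$ with $\mtx{G}$ having standard Gaussian entries, Gordon's comparison (escape-through-the-mesh) theorem gives $\min_{\vz\in\Omega}\|\mtx\Phi\vz\| \geq (\lambda_n - \omega(\Omega) - t)/\sqrt{n}$ with probability at least $1 - e^{-t^2/2}$, where $\lambda_n = \E\|\vct{g}\|$ for $\vct{g} \sim \N(0,I)$ in $\R^n$ and $\lambda_n/\sqrt{n} \to 1$. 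Requiring the right-hand side to exceed $\epsilon\|\vz\| = \epsilon$ on $\Omega$, i.e. $\lambda_n - \omega(\Omega) - t \geq \epsilon\sqrt{n}$, and absorbing the deviation term $t$ together with the $O(1/\sqrt{n})$ gap between $\lambda_n/\sqrt{n}$ and $1$ into the additive $\mathcal{O}(1)$, reproduces exactly the stated sample bound. Taking a union with the probability-one feasibility argument then completes the proof.

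The deterministic half is essentially immediate once feasibility places $\vz$ in the tangent cone, so the main obstacle is the probabilistic half: invoking Gordon's theorem with the variance-$1/n$ normalization, tracking the $\lambda_n/\sqrt{n} \to 1$ approximation, and bookkeeping the deviation and constant terms so that they collapse cleanly into the threshold $\omega(\Omega)^2/(1-\epsilon)^2 + \mathcal{O}(1)$. As this proposition is restated verbatim from~\cite{venkat}, I would cite their Corollary~3.3 for the precise constant management rather than re-expanding the square.
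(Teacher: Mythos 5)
Your proposal is correct, but note that the paper itself offers no proof of this statement: Proposition~\ref{corl:widthnoisy} is imported verbatim from \cite{venkat} (Corollary 3.3) as a black box, and the paper's only ``proof'' is the citation. What you have reconstructed is essentially the proof from the cited reference itself: the deterministic half (feasibility of $\vx^\star$, optimality forcing $\|\hat{\vx}\|_\A \leq \|\vx^\star\|_\A$ so that $\vz = \hat{\vx}-\vx^\star \in \mathcal{T}_\A(\vx^\star)$, the triangle inequality giving $\|\mtx\Phi\vz\| \leq 2\delta$, and chaining with the restricted lower isometry to get $\|\vz\| \leq 2\delta/\epsilon$) is exactly Proposition 2.2 of \cite{venkat}, and the probabilistic half via Gordon's escape-through-a-mesh theorem, with $\lambda_n = \E\|\vct g\|$ satisfying $\lambda_n \geq n/\sqrt{n+1}$, is exactly how their Corollary 3.3 certifies the isometry condition. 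Two cosmetic remarks: after squaring $(1-\epsilon)\sqrt{n} \geq \omega(\Omega) + t + \mathcal{O}(1)$, the cross term $2t\,\omega(\Omega)$ is not literally an additive constant in $n$, so the clean threshold $n \geq \omega(\Omega)^2/(1-\epsilon)^2 + \mathcal{O}(1)$ requires exactly the loose bookkeeping you flag and defer to \cite{venkat} for (their precise statement conditions on $\lambda_n$ rather than $n$, with an explicit failure probability $\exp\bigl(-\tfrac12[\lambda_n - \omega(\Omega) - \epsilon\sqrt{n}]^2\bigr)$); and the final ``union with the probability-one feasibility argument'' is unnecessary, since feasibility of $\vx^\star$ is deterministic given $\|\vct\theta\| \leq \delta$. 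Neither affects correctness; your argument supplies precisely the proof the paper chose to omit.
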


Substituting the result of Theorem \ref{mainTh} in Proposition \ref{corl:widthnoisy}, we have the following corollary yielding a sufficient condition to accurately recover a signal when the measurements are corrupted with bounded noise:

\begin{corollary}
\label{corr:noisy}
Suppose we wish to recover a  signal that lies in $k$ out of $M$ arbitrarily defined subspaces, such that the maximum subspace dimension is $B$. Let the set of active subspaces be denoted by $\K^\star$. Let $\hat{\vx}$ be the optimum of the convex program (\ref{minAnormNoise}). To have $\|\hat{\vx} - \vx^\star\| \leq \frac{2\delta}{\epsilon}$ with high probability, 
\[
 \frac{2\frac{\sigma_p^2(\mtx{K^\star})}{\sigma_p^2(\mk)}(\sqrt{2\log(M-k)} + \sqrt{B})^2 k + 2kB \kappa^2(\mk)}{(1-\epsilon)^2}
 \]
 i.i.d. Gaussian measurements are sufficient.
 \end{corollary}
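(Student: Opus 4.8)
The plan is to observe that this corollary is an immediate consequence of combining Proposition~\ref{corl:widthnoisy} with the Gaussian-width bound already established inside the proof of Theorem~\ref{mainTh}; the work is bookkeeping rather than new estimation. Proposition~\ref{corl:widthnoisy} asserts that, under the restricted lower-isometry hypothesis $\|\mtx\Phi\vz\| \geq \epsilon\|\vz\|$ for all $\vz$ in the tangent cone, the error bound $\|\vx^\star - \hat{\vx}\| \leq \frac{2\delta}{\epsilon}$ holds with high probability as soon as $n \geq \frac{\omega(\Omega)^2}{(1-\epsilon)^2} + \mathcal{O}(1)$, where $\Omega = T_\A(\vx^\star) \cap \mathbb{S}^{p-1}$. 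Thus the only quantity I need is an upper bound on $\omega(\Omega)^2$, the squared Gaussian width of the spherical part of the tangent cone at the $k$-subspace sparse approximation $\vx^\star$.

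First I would note that this is exactly what the proof of Theorem~\ref{mainTh} produces. There, the chain (\ref{jensen})--(\ref{morebound}) bounds $\omega(\Omega)^2 \leq \E[\mathrm{dist}(\vw, C^*)^2]$ by the expected squared distance from a Gaussian vector to an explicitly constructed point $\vr \in \N_\A(\vx^\star)$, and the subsequent steps $(i)$--$(vi)$ evaluate that expectation to
\[
\omega(\Omega)^2 \;\leq\; 2\frac{\sigma_p^2(\mtx{K}^\star)}{\sigma_p^2(\mk)}(\sqrt{2\log(M-k)} + \sqrt{B})^2 k + 2kB\kappa^2(\mk).
\]
Since $\vx^\star$ is defined identically here --- it is the $k$-subspace sparse part of $\vct{f}^\star$, and the normal-cone description (\ref{eqn:normalGen}) depends only on which $k$ subspaces are active --- that derivation carries over verbatim; nothing in it used exactness of the measurements. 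I would then substitute this bound into the sufficient condition of Proposition~\ref{corl:widthnoisy}, so that
\[
n \;\geq\; \frac{2\frac{\sigma_p^2(\mtx{K}^\star)}{\sigma_p^2(\mk)}(\sqrt{2\log(M-k)} + \sqrt{B})^2 k + 2kB\kappa^2(\mk)}{(1-\epsilon)^2} + \mathcal{O}(1)
\]
suffices, which is the claimed expression once the additive $\mathcal{O}(1)$ is absorbed.

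The one point that genuinely needs care --- and which I expect to be the only real obstacle --- is justifying the hypothesis $\|\mtx\Phi\vz\| \geq \epsilon\|\vz\|$ for all $\vz$ in the tangent cone. This restricted lower-isometry over the cone is not automatic, but for the same i.i.d.\ Gaussian $\mtx\Phi$ it holds with high probability via the Gordon-type escape argument underlying Proposition~\ref{corl:widthnoisy}, with the relevant failure probability controlled by the very same width $\omega(\Omega)$ that governs the measurement count. I would therefore invoke that high-probability event and conclude, on its intersection with the exact-recovery event of Proposition~\ref{corl:widthnoisy}, that the displayed number of measurements guarantees $\|\hat{\vx} - \vx^\star\| \leq \frac{2\delta}{\epsilon}$, completing the proof.
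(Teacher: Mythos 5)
Your proposal is correct and follows exactly the paper's route: the paper proves this corollary in one line by substituting the squared-width bound established in the proof of Theorem~\ref{mainTh} into Proposition~\ref{corl:widthnoisy}, which is precisely your argument. Your additional remark that the restricted lower-isometry condition $\|\mtx\Phi\vz\| \geq \epsilon\|\vz\|$ on the tangent cone is itself guaranteed with high probability by the Gordon-type argument underlying Proposition~\ref{corl:widthnoisy} (rather than being a separate assumption to verify) is a correct and worthwhile clarification, but it does not change the substance of the proof.
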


Note that we merely need to set $\sigma_p(\mtx{K^\star}) =  \kappa(\mk) = 1$, and remove the `2' from both terms  to obtain the corresponding result for the latent group lasso:
\[
 \frac{(\sqrt{2\log(M-k)} + \sqrt{B})^2 k + kB }{(1-\epsilon)^2}
 \]

\section{Compressive Imaging with Group Sparsity}
\label{sec:icip}
We consider the compressive imaging problem, that is to recover an image from a small number of random measurements. Here ``small" is used relative to the ambient dimension of the image. 
The standard lasso \cite{tibshirani} formulation is given by 
\begin{equation}
\label{eq:lasso}
\hat{\vx} = \arg \min_{\vx} \frac12 \|\vy - \mtx\Phi \vx \|^2 + \lambda \|\vx\|_1
\end{equation}

The $\ell_1$ norm acts as a surrogate for the sparsity of the signal. The lasso aims to recover a signal that is sparse, by setting most coefficients of $\vx$ to be zero.  For the exact recovery case, the lasso problem is equivalent to the Basis Pursuit \cite{BP}
\begin{equation}
\label{eq:bp}
\hat{\vx}  = \arg \min_{\vx} \|\vx\|_1 ~\ \text{ s.t. } \vy = \mtx\Phi \vx
\end{equation}

The lasso penalty reflects the fact that the wavelet coefficients are approximately sparse, but in reality not all patterns of sparsity are equally plausible/probable. For example, Fig (\ref{camW}) shows the DWT coefficients of the barbara image, and Fig. (\ref{randcamW}) shows the same coefficients, but randomly scrambled. Clearly, the $\ell_1$ norm of both sets of coefficients will be the same. This shows that the lasso penalty in itself is invariant to any structure present in the sparse coefficients. 
\begin{figure}[!htp]
\begin{center}
\subfigure[Original image]
{\includegraphics[scale = 0.35]{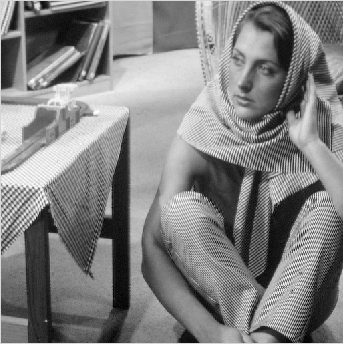}
\label{camo}}
\subfigure[3 -stage DWT of the barbara image]
{\includegraphics[scale = 0.35]{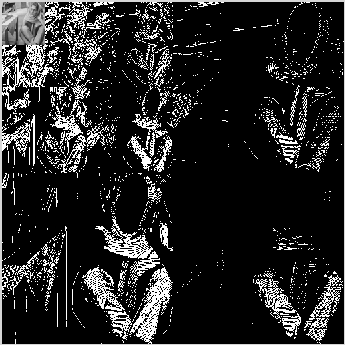}
\label{camW}}
\subfigure[Coefficients of the DWT of the Barbara image, randomized]
{\includegraphics[scale = 0.35]{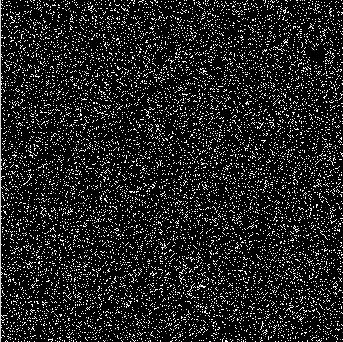}
\label{randcamW}}
\caption{The $\ell_1$ norms of both (a) and (b) are exactly equal, since they do not take structure into account} 
\label{camDWT}
\end{center}
\end{figure}

To model this structure that is inherently present between wavelet transform coefficients of images, \cite{romberg, crouse98, duarte08} propose making use of graphical models such as Hidden Markov Trees. HMT's, while providing good performance in image denoising applications (where $\mtx\Phi = \mtx{I}$) in (\ref{eq:lasso}), cannot provide acceptable reconstruction for other, more general inverse problems. This is because the presence of a (non identity) sensing matrix $\mtx\Phi$ (randomly) mixes up the coefficients for every measurement $\vy_i$ obtained. 

To overcome this mixing between the coefficients, many alternatives have been proposed. \cite{schniter10} propose using a version of loopy belief propagation to solve the recovery problem. The authors in \cite{modelbased, model09 } generalize the notion of restricted isometry properties to signals that lie in unions of subspaces, and use a modified version of CoSAMP \cite{cosamp} to solve the inverse problem. Greedy and/or suboptimal iterative reconstruction schemes are used in \cite{lado, duarte08}. Finally, the authors in \cite{som11} propose modeling the coefficients using an HMT, and using the Approximate Message Passing algorithm \cite{montanari} to solve the compressed sensing problem. 

All the works mentioned above sacrifice the recovery guarantees and the easy analysis that convex optimization algorithms provide, for the sake of modeling the dependencies between DWT coefficients. This motivates out problem: can we on the one hand model the dependencies among wavelet transform coefficients, while at the same time propose to solve a convex optimization problem similar to (\ref{eq:lasso})? 

To this end, we model the parent-child coefficients into groups. Parent-child pairs of wavelet transform coefficients across scales and at similar locations tend to be simultaneously high or low. Hence, we can take advantage of this dependency and use group lasso methods to recover the image. Fig \ref{wtree2} shows a representative example. 

\begin{figure}[htp]
\begin{center}
\includegraphics[scale =0.7]{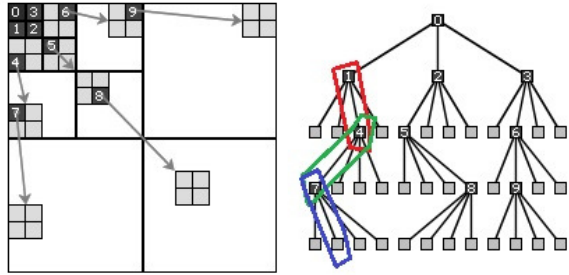}
\caption{Quadtree corresponding to the 2-d DWT. At each scale, parent coefficients can be grouped with child coefficients.}
\label{wtree2}
\end{center}
\end{figure}

We wish to recover the non zero coefficients lying on the wavelet tree shown in Fig \ref{wtree2}. When coefficients are modeled into groups, one can use the group lasso \cite{yuanlin} to recover the coefficients
\begin{equation}
\label{eq:glassoyuan}
\hat{\vx} = \arg \min_{\vx} \frac12 \|\vy - \mtx\Phi \vx \|^2 + \lambda \sum_{i = 1}^M\|\vx_{G_i}\|
\end{equation}

where $\vx_{G_i}$ is the vector $\vx$ whose coefficients not indexed by group $G_i$ are set to zero. The group lasso as shown in (\ref{eq:glassoyuan}) suffers from a drawback however. It was recently argued in \cite{jacob} that the sparsity pattern recovered by the group lasso can be expressed as a complement of a union of groups. One look at Fig. \ref{wtree2} tells us that we are interested in the recovery of sparsity patterns that can be expressed as a union of (overlapping) groups. To this end, the authors in \cite{jacob} propose the latent group lasso \cite{latent, percival}
\begin{equation}
\label{eq:glasso}
\hat{\vx} = \arg \min_{\vx} \frac12 \|\vy - \mtx\Phi \vx \|^2 + \lambda \Omega^{\G}_{overlap}(\vx)
\end{equation}

where $\Omega^{\G}_{overlap}(\vx)$ is the latent group lasso norm.

The latent group lasso lends itself well to the sort of problems we are concerned about in this paper. For a thorough analysis of the various properties of the latent group lasso penalty, we refer the interested reader to \cite{latent}. In the next section, we show how the latent group lasso can be effectively used to recover images, when we model the DWT coefficients to lie in parent-child groups. 

\section{Experiments}
\label{sec:expts}
In this section we aim to show two things:
\begin{enumerate}
\item The bound we derived in Theorems \ref{mainTh} and \ref{mainThglasso} holds for a wide variety of cases, and is invariant to the grouping observed in the signal
\item By modeling the DWT coefficients of images (and 1D signals of course) into parent-child groups, we can recover the signal efficiently and exactly/robustly
\end{enumerate}

Also, henceforth we refer to the latent group lasso method as Glasso.

\subsection{Sampling Bounds for Subspace Sparse Signals}
We extensively tested our method against the standard lasso procedure. In the case where the groups overlap, we use the replication method outlined in \cite{jacob}, to reduce the optimization problem to that of non overlapping groups. \vspace{-1mm}
We compare the number of measurements needed for our method with that needed for the lasso. For the lasso, it would be instructive to keep in mind the bound derived in \cite{venkat} , \emph{viz.} $(2s+1)\log(p-s)$. In the case of non overlapping groups, the bound evaluates to $(2kB + 1)\log(kM - kB)$. We generate length $p =2000$ signals, made up of $M = 100$ non-overlapping groups of size $B = 20$. We set $k = 5$ groups to be ``active", and the values within the groups are drawn from a uniform $[0,1]$ distribution. The active groups are assigned uniformly at random. The sparsity of the signal will thus be $s = 100$

We use SpaRSA \cite{sparsa} for the lasso and the group lasso with overlap, learning $\lambda$ over a grid. Fig. \ref{fig:complasso} displays the mean reconstruction error $||\hat{\vx} - \vx^*||_2^2 / p$ as a function of the number of random measurements taken. The errors have been averaged over 100 tests, and each time a new random signal was generated with the above mentioned parameters. 

From the parameters considered, we conclude that $\approx 380$ measurements are sufficient to recover the signal. When we have 380 measurements, the lasso does not recover the signal exactly, as seen in Fig \ref{fig:complasso}, but the latent group lasso does. 

\begin{figure}[!h]
\centering
\includegraphics[scale = 0.3]{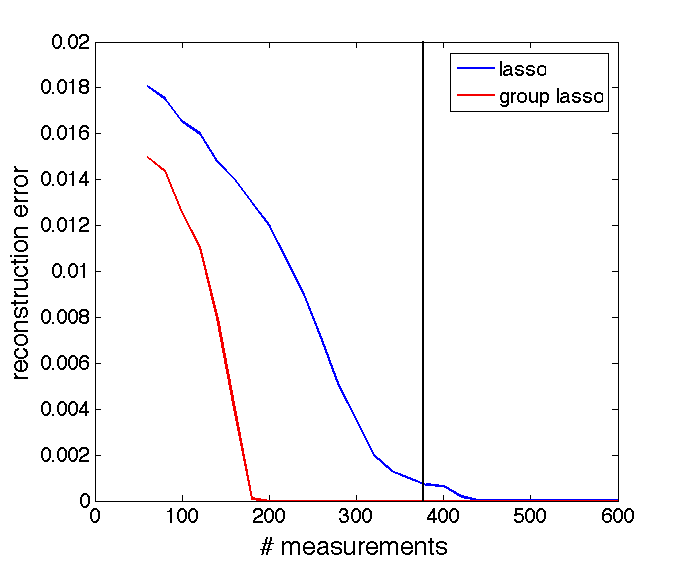}
\caption{The group lasso (red) compared with the lasso (blue). The vertical line indicates our bound. Note that our bound (380) predicts exact recovery of the signal, while at the same value, the lasso does not recover the signal}
\label{fig:complasso}
\end{figure}

To show that the bound we compute holds regardless of the complexity of groupings, we consider the following scenario: Suppose we have $M = 100$ groups, each of size $B = 40$. $k = 5$ of those groups are active, and the values within each group are assigned from a uniform $[-1,1]$ distribution. We arrange these groups in three configurations:

\begin{enumerate}
\item The groups do not overlap, yielding a signal of length $p = 4000$, and signal sparsity $s = 200$.
\item A partial overlapping scenario, where apart from the first and last group, every group has $20$ elements in common with a group above it, and $20$ common with the group below, giving $p = 2020$, $s \in [120, ~\ 200]$ depending on which of the 100 groups are active.
\item A random overlap case where the first 50 groups are non overlapping and the remaining 50 are assigned uniformly at random from the existing $p = 2000$ indices. $s \leq 200$ in this case.
\end{enumerate}
The scenarios we consider are depicted in Fig. \ref{fig:scenes}. In each of the cases, we compute the bound to be $\approx 630$.  The bound becomes looser as the complexity of the groupings increases. This, as argued before, is a result of the bound for the signal sparsity becoming looser.

\begin{figure}[!h]
\centering
\includegraphics[scale = 0.75]{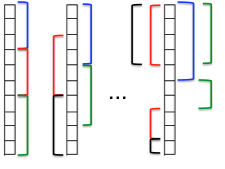}
\caption{Types of groupings considered. Each set of coefficients encompassed by one color belongs to one group.}
\label{fig:scenes}
\end{figure}

We can see from Fig. \ref{fig:scenes_resultsgl} that our group lasso bound $(\approx 630)$ holds for all cases. For the sake of comparison, we considered the lasso performance on the signals in cases (1) - (3) as well, and these are plotted in Fig. \ref{fig:scenes_resultsl}.  From the values of $p$ and $s$ computed for the three cases, we have the corresponding bounds for the lasso \cite{venkat} to be 3305 for the no overlap case (1), [1819, 3010] for the partial overlap case (2) and $3000$ for case (3). 

\begin{figure}[!h]
\begin{center}
\subfigure[performance of the group lasso on cases considered in Figure \ref{fig:scenes}. Note that our bound evaluates to 630, clearly sufficient measurements to recover the signal in all cases.]{
\includegraphics[scale = 0.45]{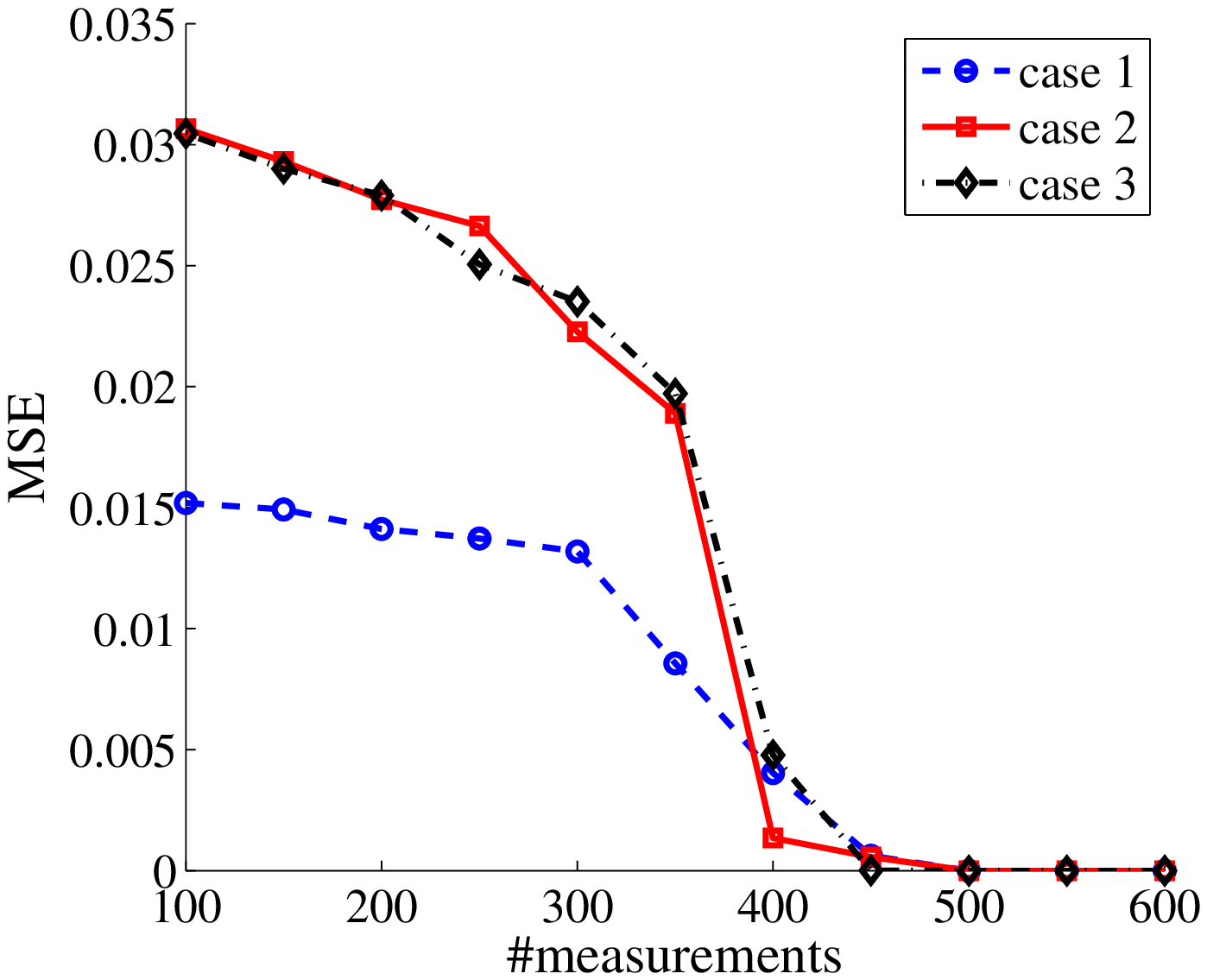}
\label{fig:scenes_resultsgl}}
\subfigure[performance of the lasso on cases considered in Figure \ref{fig:scenes}.]{
\includegraphics[scale = 0.45]{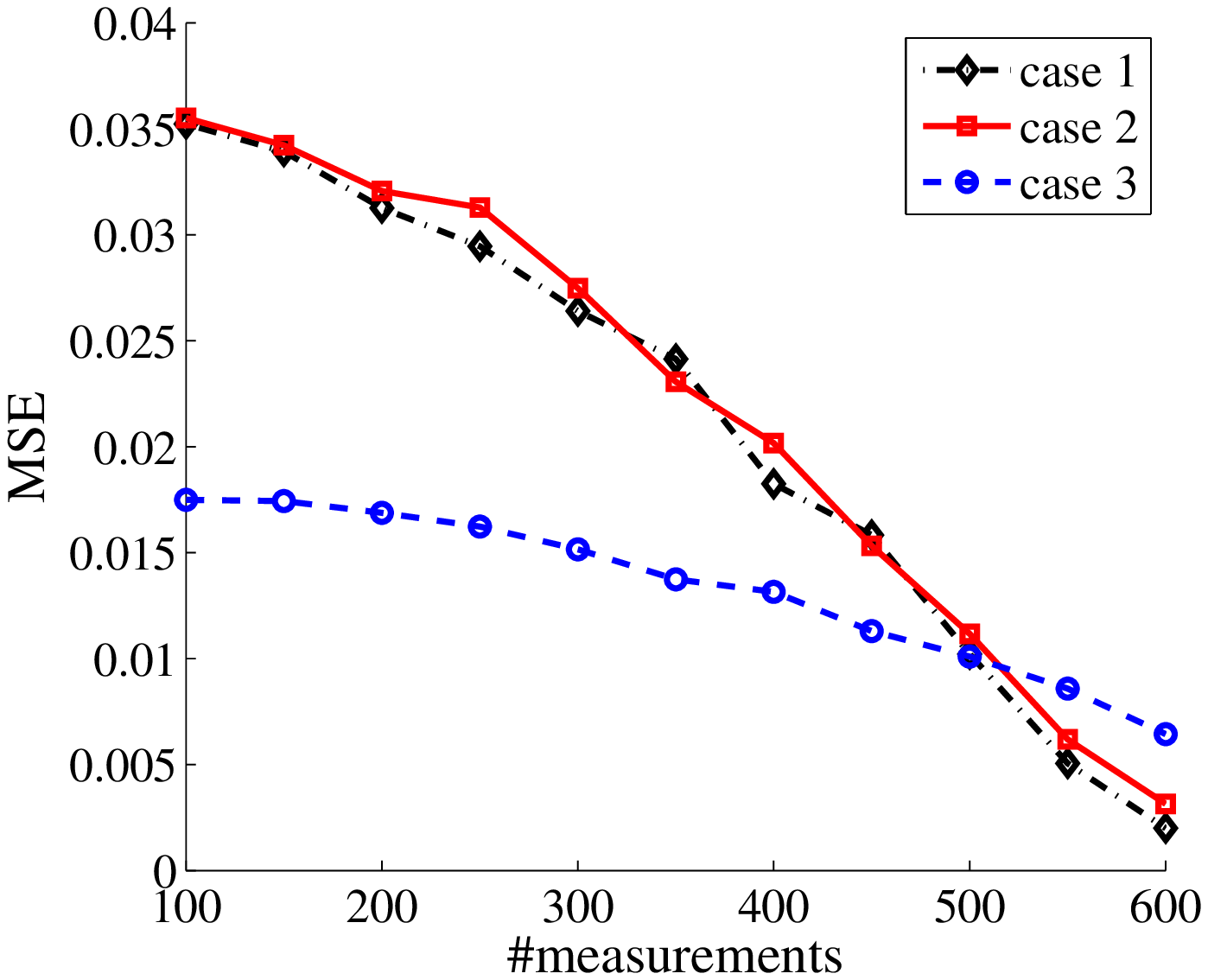}
\label{fig:scenes_resultsl}}
\caption{(Best seen in color) Performance on various grouping schemes.  The group lasso outperforms the lasso in all cases}
\end{center}
\end{figure}

We consider exact recovery of the wavelet transform coefficients of the ``blocks" signal (Fig. \ref{fig:blocks}). We group the wavelet transform coefficients into parent child pairs as outlined in Section \ref{sec:icip}.  In this case, for a $p = 16384$ length signal, we have $M = 16382$ groups, and the maximum group size is $B = 2$. We use the Haar wavelet bases to decompose the image. Fig. \ref{fig:recons} shows the reconstruction obtained from $1690$ measurements, corresponding to the bound computed for $k = 47$. Of course, we can compute $k$ since we have the original signal with us.  We see that our bound yields a sufficient number of measurements for exact recovery.

\begin{figure}[!h]
\begin{center}
\subfigure[original signal]{
\includegraphics[scale = 0.32]{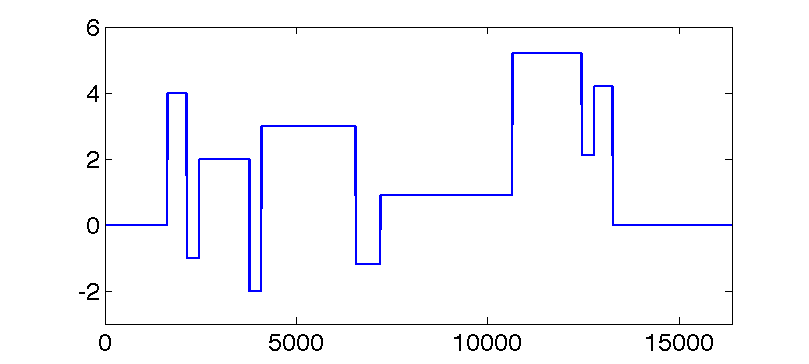}
\label{fig:blocks}}
\subfigure[reconstruction]{
\includegraphics[scale = 0.32]{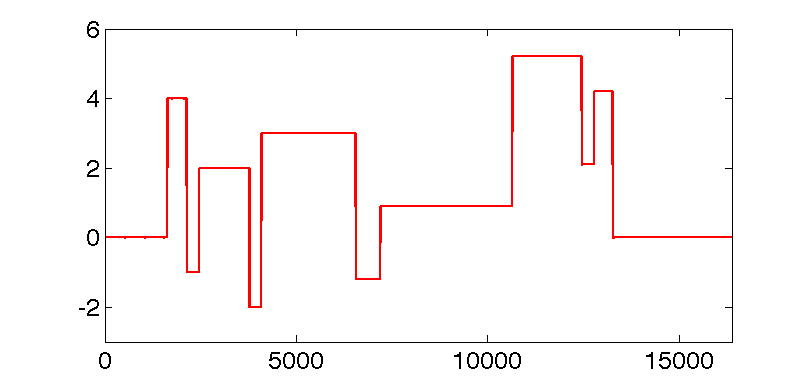}
\label{fig:recons}}
\caption{Exact reconstruction of a length 16384 signal from 1690 measurements in the wavelet domain}
\end{center}
\end{figure}

Our final experiment outlines the relationship between the number of measurements taken and the size of the problem. We generated test signals that were group sparse, with each active group having coefficients selected randomly from a uniform $\mathcal{U}[-1,1]$ distribution. We fix the group size $B$ to be 6. We consider two cases: 
\begin{itemize}
\item The non overlapping case (1), and
\item The partial overlapping case (2)
\end{itemize}

Fig. \ref{fig:boundmse} shows the probability of error as the number of measurements increases. In the figure, note that we show the total number of groups $(M)$ in the signal. For each $M$, we fix the group sparsity level $k$ to be $M/10$. The results are averaged over 100 tests, and the probability of error is computed empirically. It can be seen in Fig. \ref{fig:boundmse} that, regardless of the groups overlapping or not, we need roughly the same number of measurements to achieve a low probability of error. 

\begin{figure}[!h]
\begin{center}
\subfigure[Non overlapping groups (i)]{
\includegraphics[scale = 0.38]{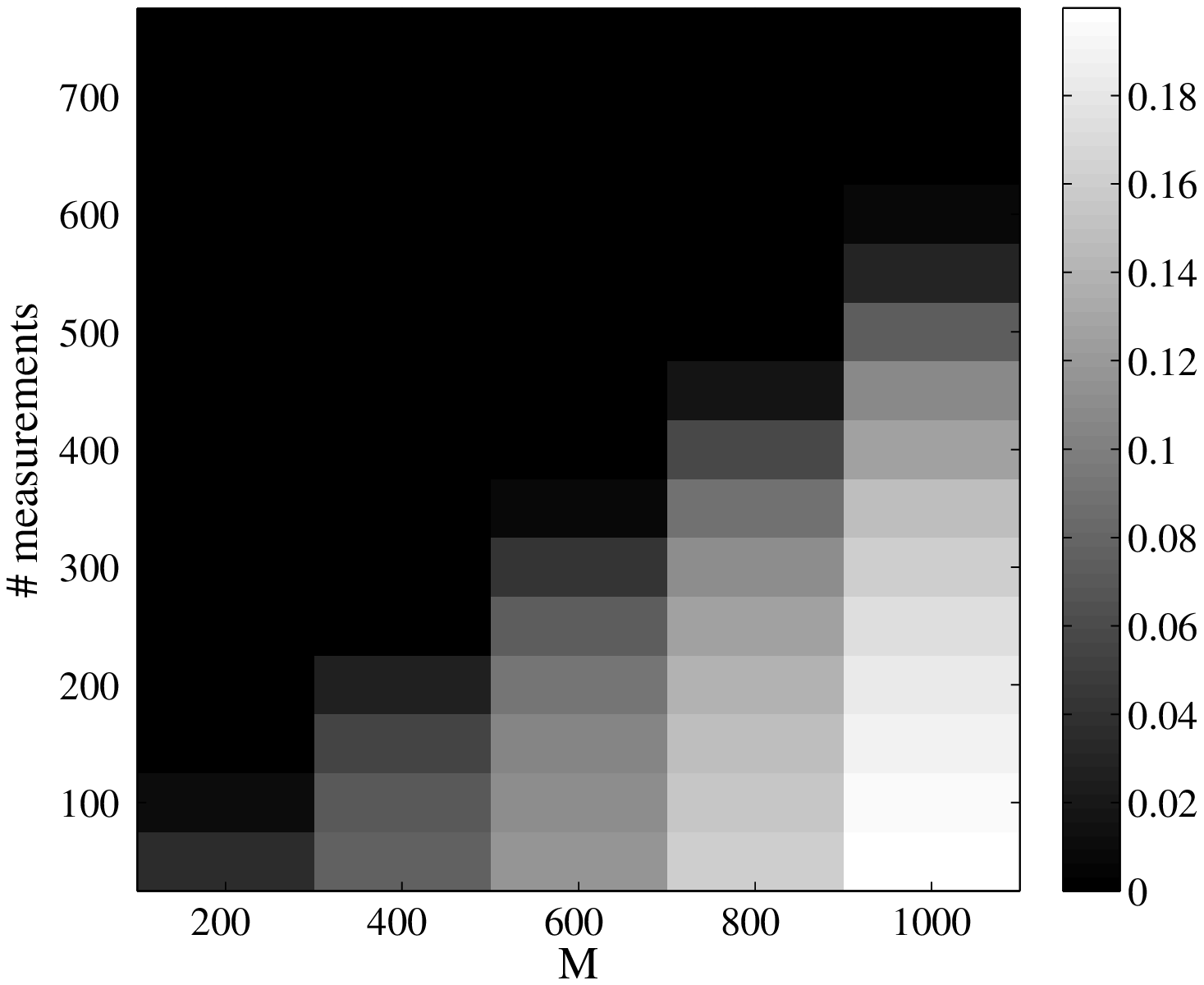}
\label{fig:t1}}
\subfigure[Overlapping groups (ii)]{
\includegraphics[scale = 0.39]{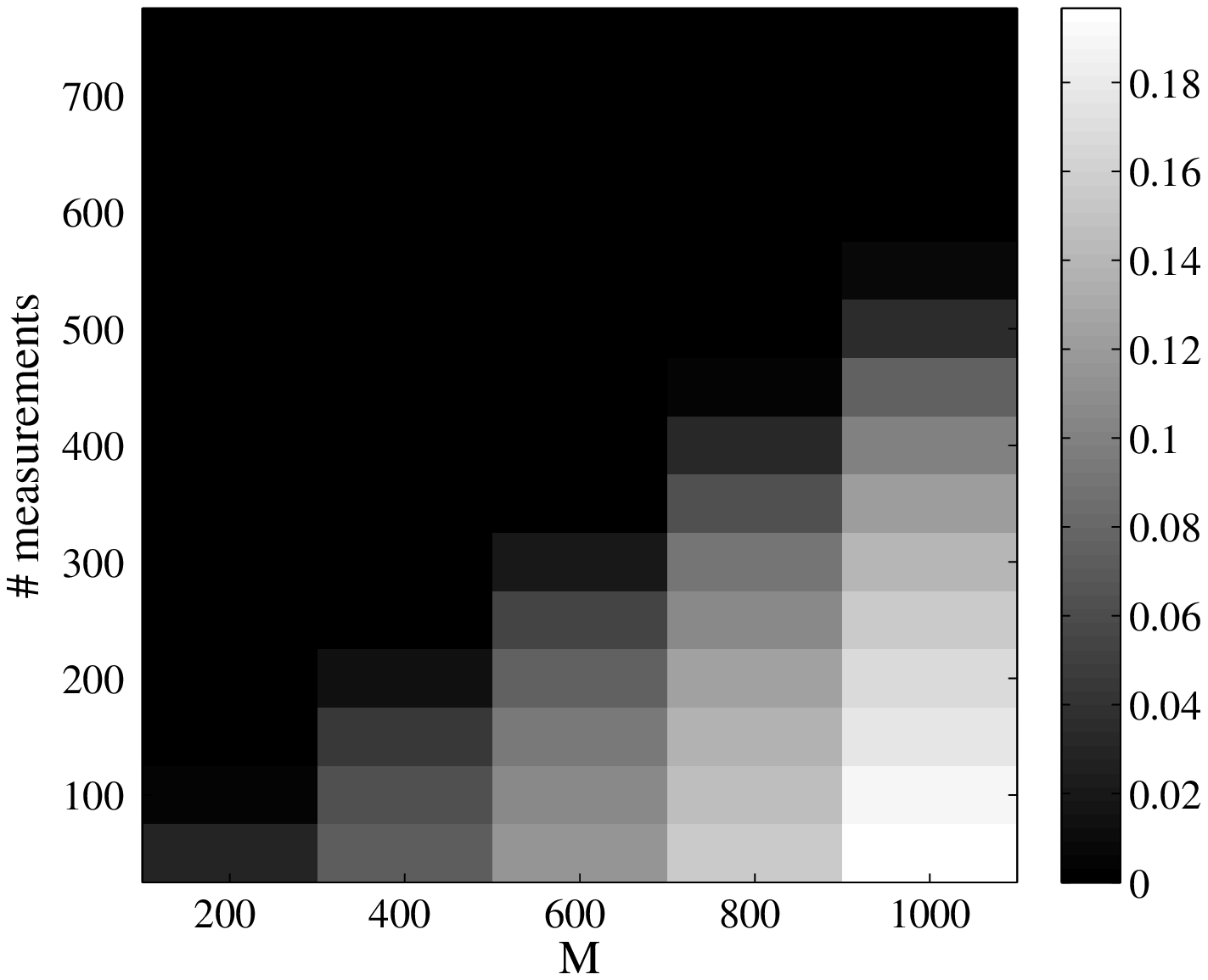}
\label{fig:t2}}
\caption{Number of groups \emph{vs} Number of measurements for exact signal reconstruction. The color bar indicates the probability of error, computed empirically. Notice how the number of measurements needed to achieve low probability of error is nearly the same in both cases, highlighting that we indeed do not pay a penalty for complicated grouping strategies}
\label{fig:boundmse}
\end{center}
\end{figure}

\subsection{Modeling DWT coefficients into groups}

In the spirit of \cite{som11, modelbased}, we considered a $128 \times 128$ section of the cameraman image, and obtained 5000 iid gaussian measurements from it. No noise was added to the image. We compare our methods with the ones displayed in \cite{som11}.  Fig \ref{fig:som} has been taken directly from \cite{som11}, and our result is shown in Fig. \ref{fig:ours}. 
\begin{figure*}[!ht]
\begin{center}
\subfigure[Reconstruction Performance of Various Methods]
{\includegraphics[scale = 0.5]{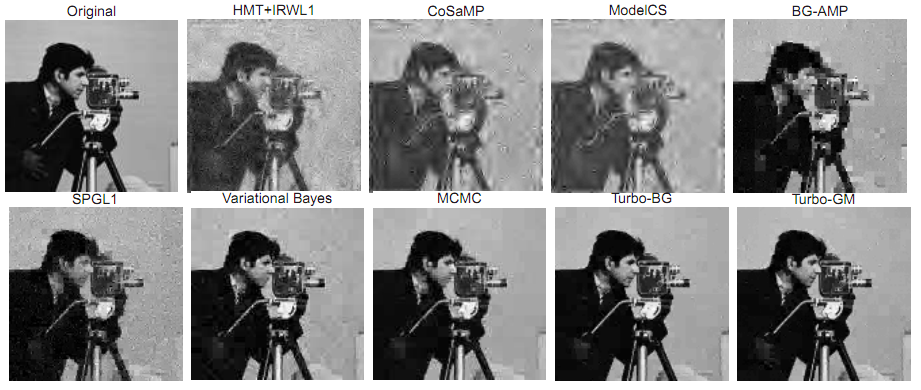}
\label{fig:som}}
\subfigure[Reconstruction using Glasso]
{\includegraphics[scale = 0.7]{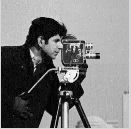}
\label{fig:ours}}
\caption{Reconstruction of a section of the cameraman image using various methods.}
\label{ampfigs}
\end{center}
\end{figure*}

For the basis of comparison, we zoom into similar regions from the best performing methods in Fig. \ref{fig:som} and Fig. \ref{fig:ours} , in Fig. \ref{zoom}. It can be seen that our method performs comparably to the turbo-BG and turbo-GM methods. 

\begin{figure}[!h]
\begin{center}
\subfigure[MCMC]
{\includegraphics[width = 40mm, height = 40mm]{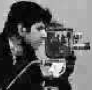}}
\subfigure[Turbo-BG]
{\includegraphics[width = 40mm, height = 40mm]{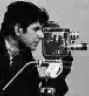}}
\subfigure[Turbo-GM]
{\includegraphics[width = 40mm, height = 40mm]{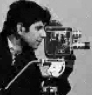}}
\subfigure[Glasso]
{\includegraphics[width = 40mm, height = 40mm]{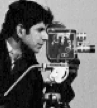}}
\caption{comparison of our method against the state of the art. The figures shown are zoomed in versions of those in Fig. \ref{ampfigs}}
\label{zoom}
\end{center}
\end{figure}

Along similar lines, we tested our methods for noiseless image recovery using the Microsoft Research Object Class Recognition  database\footnote{http://research.microsoft.com/en-us/projects/ObjectClassRecognition}. The dataset consists of images categorized into 20 types, with roughly 30 images of each type. We used the first 10 images of each type to generate a training set of 200 images, which was used to learn the regularization parameters. To compare and contrast our results with other methods tested in \cite{som11} (Fig. 6)\footnotemark, we compute the Normalized Mean Square Error (NMSE) of our methods over the same dataset for comparison. The normalized mean square error (in dB) for the true image $\vx$ is given by $10\times \log\left( \frac{\|\widehat{\vx} - \vx\|^2}{\|\vx\|^2} \right)$. We resized the images to size $128 \times 128$, and obtained $5000$ measurements for each case. 

\begin{figure*}[!htp]
\centering
\includegraphics[scale = 0.45]{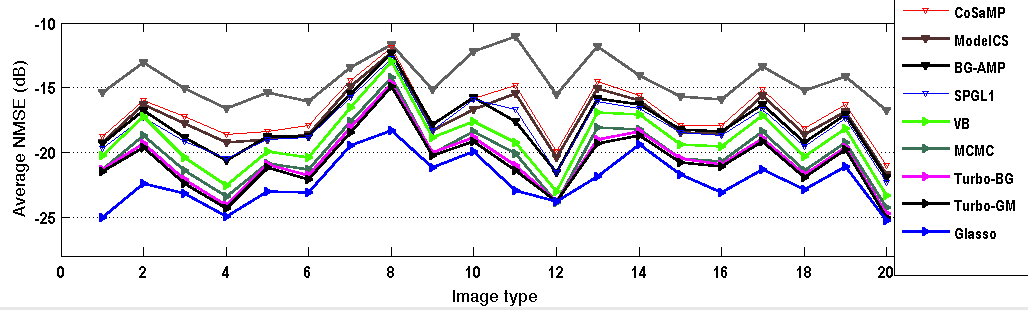}
\caption{comparison of various methods, and Glasso. Note that, the lower the value of NMSE, the better the performance.}
\label{fig:comparison}
\end{figure*}

\footnotetext{The authors thank Subhojit Som and Phil Schniter for sharing data for Fig. \ref{fig:comparison}}

\begin{figure}[!!h]
\centering
\includegraphics[scale = 0.5]{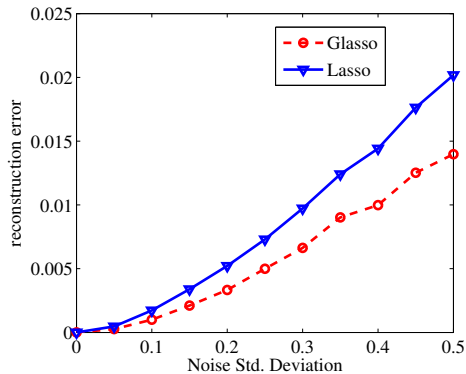}
\caption{Comparison of the two methods in the presence of noise.}
\label{noisy}
\end{figure}
Fig. \ref{noisy} shows the results we obtain as a function of the noise standard deviation. For the purpose of the experiment, we consider piecewise constant signals of length 1024, having 5 jumps. The location of the jumps is chosen at random, and the magnitude of each ``piece" is chosen uniformly between $[-1,1]$. We take 256 measurements for both the lasso and group lasso. From the figure, it is clear that by modeling the wavelet coefficients into parent-child pairs, we can better reconstruct signals in the presence of noise. The results are averaged over 1000 randomly generated signals.

\section{Conclusions and Discussion}
\label{sec:conc}
In this paper, we showed that one can  recover a signal known to lie in a  sparse union of subspaces exactly using atomic norm minimization algorithms. The number of measurements needed depend on the number of active subspaces, their dimensions and the relative angles between subspaces. We also showed that the measurement bound we derived is universal, in that it holds regardless of the nature and specific structure of the subspaces, in terms of overlaps. Indeed, the bounds can be specialized to cases where one is interested in a specific type of grouping. We subsequently extended these results for signals lying in a generic union of groups.

We also proposed a novel modeling strategy for DWT coefficients of signals. By modeling the coefficients into parent-child groups, we were able to take advantage of convex recovery methods that provably guarantee signal recovery. We showed that our method is at least as good as the current state of the art in non adaptive compressed sensing.



We note here that we do not claim the optimality of the particular grouping method that we have used , \emph{viz.} grouping the parent child pairs together. How best to group wavelet coefficients is still an open question, and is an avenue for further research. We prefer the parent-child pairs for its simplicity, and due to the fact that the groupings yield acceptable results, as seen in Section \ref{sec:expts}. 


%

\section{Appendices}
\subsection{Proof of Lemma \ref{lemma:chisquos}}
\label{appa}
\begin{proof}
Let $M_L := \max_{1\leq i \leq L} q_i$. For $t>0$, we have that
\begin{align*}
\E[M_L] &= \frac{\log[\exp(t\cdot \E[M_L])]}{t} \\
               &\stackrel{(a)}\leq \frac{\log[\E[\exp(t \cdot M_L)]]}{t} \\
               &\stackrel{(b)}= \frac{\log[\E [\max_{1\leq j \leq L} \exp(t \cdot q_j)]]}{t} \\
               &\stackrel{(c)} \leq \frac{\log[L \E[\exp(t \cdot q_1)]]}{t} \\
               &= \frac{\log(L) - \frac d2 \log(1 - 2t)}{t} 
\end{align*}
Where (a) follows from Jensen's inequality , (b) follows from the monotonicity of the exponential function, and (c) merely bounds the maximum by the sum over all the elements. Now, setting $t = [(2 + 2\epsilon)]^{-1}$ with $\epsilon = \sqrt{\frac{d}{2\log(L)}}$ yields 
\[
\E[M_L] \leq (\sqrt{2\log(L)} + \sqrt{d})^2
\]
\end{proof}

Note that $t$ can be optimized depending on the application.    We use this particular choice because it makes no assumptions about the relative magnitudes of $(M-k)$ and $B$.

\subsection{Proof of Lemma \ref{lemma:ballbound}}
\label{appb}
\begin{proof}
Let $K^\star = [K_j]_{j \in J^\star}$. By duality, it suffices to show that $\|z\|_{\A} \leq \sqrt{|\K^\star|}~\  \sigma_p(\mtx{K}^\star) \|z\|$ for all $z$ with $supp(z) \subset \K^\star$.  For any such $z$, there exists a representation $z = \K^\star b$ such that 
\begin{align*}
z &= [K_{j1} K_{j2} \ldots K_{j|J|}] [b^1 b^2 \ldots b^{|\K^*|}]^T \\
&= \sum_{K \in \K^\star} K_i b^i 
\end{align*}
so that none of the supports of $b^i$ overlap.  It then follows that
	\begin{align*}
	\|z\|_{\A} &= \| \sum_{K \in \K^\star} K_i b^i \| \\
	&\stackrel{(i)}\leq \sum_{K\in \K^\star} \|b^i\| \\ 
	&\stackrel{(ii)}\leq \sqrt{|\K^\star|} \left(\sum_{K\in \K^\star} \|b_K\|^2\right)^{1/2} \\  
	&= \sqrt{|\K^\star|} \|b\| \\
	&\stackrel{(iii)}\leq \sqrt{|\K^\star|} ~\ \sigma_p(\mtx{K}^\star) ~\ \|\z\|
	\end{align*}
	Where (i) follows from the definition of the norm  $\|\cdot\|_\A$, (ii) is a consequence of the relation $\|\beta\|_1 \leq \sqrt{k} \| \beta \|_2$ for $k$ dimensional vectors $\beta$ and (iii) follows from minimizing $\|\vct{b}\|$ subject to $\z = \mtx{K}^\star \vct{b}$
\end{proof}

\subsection{Bounding the last term in the proof of Theorem \ref{mainTh}}
\label{appc}
\begin{proof}
First, note that we can write 
\begin{align*}
\E \left[ \| \mk^S \bw^S \|^2 \right]  &= \E \left[ \| \mk \mP \bw \|^2  \right] \\
&=  \E \left[ \| \mk \mP\mk^T  (\mk\mk^T)^{-1} \vw \|^2\right] 
\end{align*}
Where $\mP(\cdot)$ is the operator that projects $(\cdot)$ onto the space spanned by the indices in $S$. 
 
Now, 
\begin{align*}
& \E \left[ \| \mk \mP\mk^T  (\mk\mk^T)^{-1} \vw \|^2\right]  \\
&= \E \left[  \| \vw^T (\mk\mk^T)^{-1}\mk \mP^T\mk^T \mk \mP\mk^T  (\mk\mk^T)^{-1} \vw \|^2\right] \\
&= \E \left[ tr \left(\mk \mP\mk^T  (\mk\mk^T)^{-1} \vw \vw^T (\mk\mk^T)^{-1}\mk \mP^T\mk^T  \right) \right] \\
&= tr\left( \mk \mP\mk^T  (\mk\mk^T)^{-1} \E[\vw \vw^T]  (\mk\mk^T)^{-1}\mk \mP^T\mk^T  \right) \\
&\stackrel{(a)}{=} tr\left( \mk \mP\mk^T  (\mk\mk^T)^{-1} (\mk\mk^T)^{-1}\mk \mP^T\mk^T  \right) \\
&= \| (\mk \mk^T)^{-1} \mk \mP \mk^T \|^2_F\\
\end{align*} 
where (a) is a consequence $\vw$ being a standard Gaussian vector. Now, let the singular value decomposition of $\mk$ be $\mk = \mtx{U} \mtx{\Sigma} \mtx{V^T}$. Then, 
\begin{align*}
& \| (\mk \mk^T)^{-1} \mk \mP \mk^T \|^2_F\\
&= \| \left( \mU \mS^2 \mU^T \right)^{-1} \mU \mS \mV^T \mP \mV \mS \mU^T \|^2_F \\
&= \| \mU^{-T} \mS^{-2} \mS \mV^T \mP \mV \mS \mU^T \|^2_F \\
&= \| \mS^{-1} \mV^T \mP \mV \mS \|^2_F \\
&= \| \mS^{-1} \|^2 \| \mV^T \mP \mV \|^2_F \| \mS \|^2 \\
&= \kappa(\mk)^2 \| \mP \|^2_F \\
&= \kappa(\mk)^2 |S| 
\end{align*}

\end{proof}

\subsubsection*{Acknowledgments}
This work was partially supported by AFOSR grant FA9550-09-1-0140 and the DARPA KECOM Program, and by ONR Award N00014-11-1-0723.

{\small
\bibliographystyle{abbrv}
\bibliography{TSP_UoS_arxiv}

\begin{thebibliography}{10}

\bibitem{bachConsistency}
F.~Bach.
\newblock Consistency of the group lasso and multiple kernel learning.
\newblock {\em Journal of Machine Learning Research}, 9:1179--1225, June 2008.

\bibitem{modelbased}
R.~G. Baraniuk, V.~Cevher, M.~F. Duarte, and C.~Hegde.
\newblock Model-based compressive sensing.
\newblock {\em IEEE Transactions on Information Theory}, 2010.

\bibitem{Bickel}
P.~Bickel, Y.~Ritov, and A.~Tsybakov.
\newblock Simultaneous analysis of the lasso and dantzig selector.
\newblock {\em Annals of Statistics}, 37(4):1705-- 1732, 2009.

\bibitem{BluDav}
T.~Blumensath and M.~Davies.
\newblock Sampling theorems for signals from the union of finite-dimensional
  linear subspaces.
\newblock {\em IEEE transactions on Information Theory}, 55(4):1872--1882, Apr
  2009.

\bibitem{CRT}
E.~J. Candes, J.~Romberg, and T.~Tao.
\newblock Robust uncertainty principles: exact signal reconstruction from
  highly incomplete frequency information.
\newblock {\em IEEE Trans. Information Theory}, 52:489--509, 2006.

\bibitem{venkat}
V.~Chandrasekaran, B.~Recht, P.~A. Parrilo, and A.~Willsky.
\newblock The convex geometry of linear inverse problems.
\newblock {\em preprint arXiv:1012.0621v1}, 2010.

\bibitem{BP}
S.~Chen, D.~Donoho, and M.~Saunders.
\newblock Atomic decomposition by basis pursuit.
\newblock {\em SIAM Review}, 43(1):129-- 159, 2009.

\bibitem{crouse98}
M.~S. Crouse, R.~D. Nowak, and R.~G. Baraniuk.
\newblock Wavelet based statistical signal processing using hidden markov
  models.
\newblock {\em Transactions on Signal Processing}, 46(4):886--902, 1998.

\bibitem{montanari}
D.~Donoho, A.~Maleki, and A.~Montanari.
\newblock Message passing algorithms for compressed sensing.
\newblock {\em National Academy of Sciences}, 2009.

\bibitem{donoho}
D.~L. Donoho.
\newblock Compressed sensing.
\newblock {\em IEEE Trans. Information Theory}, 52:1289--1306, 2006.

\bibitem{bluinnov2}
P.~Dragotti, M.~Vetterli, and T.~Blu.
\newblock Sampling moments and reconstructing signals of Þnite rate of
  innovation: Shannon meets strangfix.
\newblock {\em IEEE transactions on Signal Processing}, 55(5):1741--1757, May
  2007.

\bibitem{model09}
M.~F. Duarte, V.~Cevher, and R.~G. Baraniuk.
\newblock Model-based compressive sensing for signal ensembles.
\newblock {\em Allerton}, 2009.

\bibitem{duarte08}
M.~F. Duarte, M.~B. Wakin, and R.~G. Baraniuk.
\newblock Wavelet-domain compressive signal reconstruction using a hidden
  markov tree model.
\newblock {\em Proceedings of the IEEE International Conference on Acoustics,
  Speech, and Signal Processing (ICASSP)}, pages 5137--5140, Mar 2008.

\bibitem{eldaruos}
Y.~Eldar and M.~Mishali.
\newblock Robust recovery of signals from a structured union of subspaces.
\newblock {\em IEEE transactions on Information Theory}, 55(11):5302--5316, Nov
  2009.

\bibitem{eldaranalog1}
Y.~C. Eldar.
\newblock Compressed sensing of analog signals in shift-invariant spaces.
\newblock {\em IEEE transactions on Signal Processing}, 57(8):2986--2997, Aug
  2009.

\bibitem{eldaranalog2}
Y.~C. Eldar.
\newblock Uncertainty relations for shift-invariant analog signals.
\newblock {\em IEEE transactions on Information Theory}, 55(12):5742--5757, Dec
  2009.

\bibitem{gordon}
Y.~Gordon.
\newblock On {M}ilman's inequality and random subspaces which escape through a
  mesh in $\mathbb{R}^n$.
\newblock {\em Geometric aspects of functional analysis, Isr. Semin.},
  1317:84--106, 1986 - 87.

\bibitem{zhang09}
J.~Huang and T.~Zhang.
\newblock The benefit of group sparsity.
\newblock {\em Technical report, arXiv:0901.2962. Preprint available at
  http://arxiv.org/pdf/0903.2962v2}, May 2009.

\bibitem{huang09}
J.~Huang, T.~Zhang, and D.~Metaxas.
\newblock Learning with structured sparsity.
\newblock {\em Technical report, arXiv:0903.3002. Preprint available at
  http://arxiv.org/pdf/0903.3002v2}, May 2009.

\bibitem{jacob}
L.~Jacob, G.~Obozinski, and J.~P. Vert.
\newblock Group lasso with overlap and graph lasso.
\newblock {\em Proceedings of the 26th International Conference on machine
  Learning}, 2009.

\bibitem{jenatton}
R.~Jenatton, J.~Audibert, and F.~Bach.
\newblock Structured variable selection with sparsity inducing norms.
\newblock {\em Technical report, arXiv:0904.3523. Preprint available at
  http://arxiv.org/pdf/0904.3523v3}, Sep 2009.

\bibitem{jenatton10}
R.~Jenatton, J.~Mairal, G.~Obozinski, , and F.~Bach.
\newblock Proximal methods for hierarchical sparse coding.
\newblock {\em Technical report, arXiv:1009.3139. submitted}, 2010.

\bibitem{lado}
C.~La and M.~N. Do.
\newblock Tree based orthogonal matching pursuit algorithm for signal
  reconstruction.
\newblock {\em IEEE International Conference on Image Processing, Atlanta,
  GA.}, pages 1277 -- 1280, Oct 2006.

\bibitem{ludo}
Y.~M. LU and M.~N. Do.
\newblock A theory for sampling signals from a union of subspaces,.
\newblock {\em IEEE transactions on Signal Processing}, 56(6):2334--2345, 2008.

\bibitem{subgaussian}
S.~Mendelson, A.~Pajor, and N.~Tomczak-Jaegermann.
\newblock Reconstruction and subgaussian operators in asymptotic geometric
  analysis.
\newblock {\em Geometric and Functional Analysis}, 17(4):1248 -- 1282, 2006.

\bibitem{analogCS}
M.~Mishali and Y.~Eldar.
\newblock Blind multi-band signal reconstruction: compressed sensing for analog
  signals.
\newblock {\em IEEE Trans. Signal Processing}, 57(30):993--1009, March 2009.

\bibitem{mosci}
S.~Mosci, S.~Villa, A.~Verri, and L.~Rosasco.
\newblock A primal-dual algorithm for group sparse regularization with
  overlapping groups.
\newblock {\em Neural Information Processing Systems}, 2010.

\bibitem{cosamp}
D.~Needell and J.~Tropp.
\newblock Cosamp: Iterative signal recovery from incomplete and inaccurate
  samples.
\newblock {\em Appl. Comput. Harmon. Anal.}, 26:301--321, 2008.

\bibitem{Mest}
S.~Negahban, P.~Ravikumar, M.~Wainwright, and B.~Yu.
\newblock A unified framework for high-dimensional analysis of m-estimators
  with decomposable regularizers.
\newblock {\em Preprint ArXiv :1010:2731v1}, October 2010.

\bibitem{latent}
G.~Obozinski, L.~Jacob, and J.~Vert.
\newblock Group lasso with overlaps: The latent group lasso approach.
\newblock {\em Preprint arXiv:1110.0413v1 [stat.ML]}, Oct 2011.

\bibitem{hassibigene}
F.~Parvaresh, H.~Vikalo, S.~Misra, and B.~Hassibi.
\newblock Recovering sparse signals using sparse measurement matrices in
  compressed dna microarrays.
\newblock {\em IEEE Journal of Selected Topics in Signal Processing},
  2(3):275--285, June 2008.

\bibitem{percival}
D.~Percival.
\newblock Theoretical properties of the overlapping groups lasso.
\newblock {\em Preprint arXiv:1103.4614v2 [stat.ML]}, Nov 2011.

\bibitem{wets}
T.~Rockafellar and J.~B. Wets.
\newblock Variational analysis.
\newblock {\em Springer Series of Comprehensive Studies in Mathematics}, 317,
  1997.

\bibitem{romberg}
J.~K. Romberg, H.~Choi, and R.~G. Baraniuk.
\newblock Bayesian tree structured image modeling using wavelet domain hidden
  markov models.
\newblock {\em Transactions on Image Processing}, March 2000.

\bibitem{schniter10}
P.~Schniter.
\newblock Turbo reconstruction of structured sparse signals.
\newblock {\em Proc. Conference on Information Sciences and Systems}, Mar 2010.

\bibitem{som11}
S.~Som and P.~Schniter.
\newblock Compressive imaging using approximate message passing and a
  markov-tree prior.
\newblock {\em IEEE transactions on signal processing}, 2011.

\bibitem{pathway}
A.~Subramanian~et al.
\newblock Gene set enrichment analysis: a knowledge-based approach for
  interpreting genome-wide expression proÞles.
\newblock {\em National Academy of Sciences}, 102:15545Ð15550, 2005.

\bibitem{tangCMSV}
G.~Tang and A.~Nehorai.
\newblock Performance analysis of sparse recovery based on constrained minimal
  singular values.
\newblock {\em IEEE Transactions of Signal Processing}, to appear.

\bibitem{tibshirani}
R.~Tibshirani.
\newblock Regression shrinkage and selection via the lasso.
\newblock {\em Journal of the Royal Statistical Society. Series B}, pages
  267--288, 1996.

\bibitem{bluinnov1}
M.~V. Vetterli, P.~Marziliano, and T.~Blu.
\newblock Sampling signals with Þnite rate of innovation.
\newblock {\em IEEE transactions on Signal Processing}, 50:1417--1428, Jun
  2002.

\bibitem{sparsa}
S.~J. Wright, R.~D. Nowak, and M.~A.~T. Figueiredo.
\newblock Sparse reconstruction by separable approximation.
\newblock {\em Transactions on Signal Processing}, 57:2479--2493, 2009.

\bibitem{yuanlin}
M.~Yuan and Y.~Lin.
\newblock Model selection and estimation in regression with grouped variables.
\newblock {\em Journal of the royal statistical society. Series B}, 68:49--67,
  2006.

\end{thebibliography}
}

\end{document}